\documentclass[11pt]{article}
\pdfoutput=1

\usepackage{fullpage}
\usepackage{amsmath}
\usepackage{amssymb}
\usepackage{graphicx} 

\usepackage{fancybox,graphicx}
\usepackage[T1]{fontenc}
\usepackage[latin1]{inputenc}
\usepackage{float}
\usepackage{subfigure}
\usepackage{epsfig}
 \floatstyle{ruled}
\newfloat{alg}{tbp}{loa}
\floatname{alg}{Algorithm}

{
\rm
\begin{tabbing}
....\=...\=...\=...\=...\=  \+ \kill
} %
{\end{tabbing}
}

\newtheorem{definition}{Definition}[section]
\newtheorem{lemma}{Lemma}[section]
\newtheorem{corollary}{Corollary}[section]
\newtheorem{theorem}{Theorem}[section]
\newtheorem{proposition}{Proposition}[section]
\newtheorem{assumption}{Assumption}[section]

\newcommand{\bx}{{\mathbf x}}

\newcommand{\by}{{\mathbf y}}

\newcommand{\rE}{{\mathbb E}}

\newcommand{\pr}{\mathrm{Pr}}

\newcommand{\Fr}{{\mathrm{supp}}}

\newcommand{\hb}{{\hat{\beta}}}
\newcommand{\bb}{{\bar{\beta}}}
\newcommand{\R}{{\mathbb R}}

\newcommand{\BlackBox}{\rule{1.5ex}{1.5ex}}
\newenvironment{proof}{\par\noindent{\bf Proof\ }}{\hfill\BlackBox\\[2mm]}

\begin{document}

\title{The Benefit of Group Sparsity}
\author{Junzhou Huang \\ Computer Science Department, Rutgers University 
\and Tong Zhang \\ Statistics Department, Rutgers University}
\date{}
\maketitle

\begin{abstract}
This paper develops a theory for group Lasso using
a concept called {\em strong group sparsity}.
Our result shows that group Lasso is superior to standard Lasso
for strongly group-sparse signals.
This provides a convincing theoretical justification for using
group sparse regularization when the underlying group structure is consistent with the data. Moreover, the theory predicts some limitations of the group Lasso formulation that are confirmed by simulation studies.
\end{abstract}

\section{Introduction}

We are interested in the sparse learning problem for
least squares regression.
Consider a set of $p$ basis vectors $\{\bx_1,\ldots,\bx_p\}$
where $\bx_j \in \R^n$ for each $j$. Here, $n$ is the sample size.

Denote by $X$ the $n \times p$ data matrix, with column $j$
of $X$ being $\bx_j$.
Given an observation $\by=[y_1,\ldots,y_n] \in \R^n$
that is generated from a sparse linear combination
of the basis vectors plus a stochastic noise
vector $\epsilon \in \R^n$:
\[
\by = X \bb + \epsilon = \sum_{j=1}^d \bb_j \bx_j + \epsilon,
\]
where we assume that the target coefficient $\bb$ is sparse.
Throughout the paper, we consider fixed design only. That is,
we assume $X$ is fixed, and randomization is with respect
to the noise $\epsilon$. Note that we do not assume that the noise
$\epsilon$ is zero-mean.

Define the support of a sparse vector $\beta \in \R^p$ as
\[
\Fr(\beta)=\{j: \beta_j \neq 0\} ,
\]
and $\|\beta\|_0 = |\Fr(\beta)|$. A natural method for sparse learning
is $L_0$ regularization:
\[
\hb_{L0}= \arg\min_{\beta \in \R^p}
\|X \beta - \by\|_2^2 \quad \text{subject to } \|\beta\|_0 \leq k,
\]
where $k$ is the sparsity.
Since this optimization problem is
generally NP-hard, in practice, one often consider the following
$L_1$ regularization problem, which is the closest convex
relaxation of $L_0$:
\[
\hb_{L1}= \arg\min_{\beta \in \R^p}
\left[ \frac{1}{n}\|X \beta - \by\|_2^2 + \lambda \|\beta\|_1 \right] ,
\]
where $\lambda$ is an appropriately chosen regularization parameter. This method
is often referred to as Lasso in the statistical literature.

In practical applications, one often knows a group structure on the coefficient
vector $\bar{\beta}$ so that variables in the same group tend to be zeros or
nonzeros simultaneously. The purpose of this paper is to show that if such a
structure exists, then better results can be obtained.

\section{Strong Group Sparsity}

For simplicity, we shall only consider non-overlapping groups in this paper,
although our analysis can be adapted to handle moderately
overlapping groups.

Assume that $\{1,\ldots,p\}=\cup_{j=1}^m G_j$ is partitioned into $m$
disjoint groups $G_1, G_2, \ldots, G_m$:
$G_i \cap G_j = \emptyset$ when $i \neq j$.
Moreover, throughout the paper, we let
$k_j=|G_j|$, and $k_0=\max_{j \in \{1,\ldots,m\}} k_j$.
Given $S \subset \{1,\ldots,m\}$ that denotes a set of groups, we define
$G_S= \cup_{j \in S} G_j$.

Given a subset of variables $F \subset \{1,\ldots,p\}$ and a 
coefficient vector $\beta \in \R^p$,
let $\beta_{F}$ be the vector in $\R^{|F|}$ which is identical to $\beta$
in $F$. Similar, $X_F$ is the $n \times |F|$ matrix
with columns identical to $X$ in $F$.

The following method, often referred to
as group Lasso, has been proposed to take advantage of the group structure:
\begin{equation}
\hb= \arg\min_{\beta}
\left[ \frac{1}{n} \left\|X\beta -\by\right\|_2^2
+ \lambda \sum_{j=1}^m \|\beta_{G_j}\|_2
\right] .
\label{eq:group-lasso}
\end{equation}
The purpose of this paper is to develop a theory
that characterizes the performance of (\ref{eq:group-lasso}).
We are interested in conditions under which group Lasso yields better
estimate of $\bb$ than the standard Lasso.

Instead of the standard sparsity assumption, where the complexity is
measured by the number of nonzero coefficients $k$,
we introduce the strong group sparsity concept
below. The idea is to measure the complexity of a sparse signal
using group sparsity in addition to coefficient sparsity.
\begin{definition}\label{def:sparsity}
  A coefficient vector $\bb \in \R^p$ is $(g,k)$ strongly group-sparse if
  there exists a set $S$ of groups such that
  \[
  \Fr(\bb) \subset G_S , \quad
  |G_S| \leq k , \quad |S| \leq g .
  \]
\end{definition}

The new concept is referred to as strong group-sparsity because
$k$ is used to measure the sparsity of $\bb$ instead of $\|\bb\|_0$.
If this notion is beneficial, then $k/\|\bb\|_0$ should be small, which means
that the signal has to be efficiently covered by the groups.
In fact, the group Lasso method
does not work well when $k/\|\bb\|_0$ is large.
In that case, the signal is only {\em weak group sparse}, and
one needs to use $\|\bb\|_0$ to precisely measure the real
sparsity of the signal.
Unfortunately, such information is not included in the group Lasso formulation, and there is no simple fix of this problem using variations of group Lasso.
This is because our theory requires that the group Lasso regularization term is strong enough to dominate the noise, and the strong regularization causes a
bias of the order $O(k)$ which cannot be removed.
This is one fundamental drawback which is inherent to
the group Lasso formulation.

\section{Related Work}

The idea of using group structure to achieve better sparse recovery
performance has received much attention.
For example, group sparsity has been considered for simultaneous sparse
approximation \cite{Wipf07TSP} and multi-task compressive sensing
 \cite{Ji08TSP} from the Bayesian hierarchical modeling point of view.
Under the Bayesian hierarchical model
framework, data from all sources contribute to the estimation of hyper-parameters in the sparse prior model. The
shared prior can then be inferred from multiple sources.
Although the idea can be justified using standard Bayesian intuition, there are
no theoretical results showing how much better (and under what kind of
conditions) the resulting algorithms perform.

In \cite{Stojnic08}, the authors attempted to derive a bound on the
number of samples needed to recover block sparse signals,
where the coefficients in each block are either all zero or all nonzero.
In our terminology, this corresponds to the case of group sparsity
with equal size groups. The algorithm considered there is a special
case of (\ref{eq:group-lasso}) with $\lambda_j \to 0^+$.
However, their result is very loose, and does not demonstrate the advantage
of group Lasso over standard Lasso.

In the statistical literature, the group Lasso (\ref{eq:group-lasso})
has been studied by a number of authors
\cite{Yuan06JRSS,Bach08-groupLasso,NarRin08,KolYua08,ObWaJo08}.
There were no theoretical results in \cite{Yuan06JRSS}.
Although some theoretical results were developed in
\cite{Bach08-groupLasso,NarRin08}, neither showed that
group Lasso is superior to the standard Lasso.

The authors of \cite{KolYua08} showed that group Lasso can be superior
to standard Lasso when each group is an infinite dimensional kernel,
by using an argument completely different from ours (they relied on the fact that meaningful analysis can be obtained for kernel
methods in infinite dimension). Their idea cannot be adapted to
show the advantage of group Lasso in finite dimensional scenarios
of interests such as in the standard compressive sensing setting.
Therefore our analysis, which focuses on the latter, is complementary to
their work.

Another related work is \cite{ObWaJo08}, where the authors
considered a special case of group Lasso in the multi-task learning
scenario, and showed that the number of samples required for recovering
the exact support set may be smaller for group Lasso under appropriate
conditions. However, there are major differences between our analysis
and their analysis.
For example, the group formulation we consider here is more general
and includes the multi-task scenario as a special case.
Moreover, we study signal recovery performance in
2-norm instead of the exact recovery of support set in their analysis.
The sparse eigenvalue condition employed in this work is often considerably
weaker than the irrepresentable type condition in their analysis
(which is required for exact support set recovery). Our analysis 
also shows that for strongly group-sparse signals, even when the
number of samples is large, the group Lasso can still have advantages
in that it is more robust to noise than standard Lasso.

In the above context, the main contribution of this work is the
introduction of the strong group sparsity concept, under which
a satisfactory theory of group Lasso is developed. Our result
shows that strongly group sparse signals can be estimated more
reliably using group Lasso, in that it requires fewer number of
samples in the compressive sensing setting, and is more robust
to noise in the statistical estimation setting.

Finally, we shall mention that independent of the authors, results 
similar to those presented in this paper have also been obtained in 
\cite{LMTG09} with a similar technical analysis. However, while our
paper studies the general group Lasso formulation, only the special
case of  multi-task learning is considered in \cite{LMTG09}.

\section{Assumptions}

The following assumption on the noise is important in our analysis.
It captures an important advantage of group Lasso over standard Lasso
under the strong group sparsity assumption.
\begin{assumption}[Group noise condition]
\label{assump:noise}
There exist non-negative constants $a, b$
such that for any fixed group $j \in \{1,\ldots,m\}$,
and $\eta \in (0,1)$: with probability larger than
$1-\eta$,
the noise projection to the $j$-th group
is bounded by:
\[
\|(X_{G_j}^\top X_{G_j})^{-0.5} X_{G_j}^\top (\epsilon- \rE \epsilon) \|_2
 \leq a \sqrt{k_j} + b \sqrt{-\ln \eta} .
\]
\end{assumption}

The importance of the assumption is that the concentration term
$\sqrt{-\ln \eta}$ does not depend on $k$. This reveals a significant benefit
of group Lasso over standard Lasso: that is, the concentration term
does not increase when the group size increases. This implies that if
we can correctly guess the group sparsity structure, the group
Lasso estimator is more stable with respect to stochastic
noise than the standard Lasso.

We shall point out that this assumption holds for independent
sub-Gaussian noise vectors, where $e^{t (\epsilon_i-\rE \epsilon_i)} \leq e^{t^2 \sigma^2/2}$ 
for all $t$ and $i=1,\ldots,n$. It can be shown that
one may choose $a=2.8$ and $b=2.4$ when $\eta \in (0,0.5)$.
Since a complete treatment of sub-Gaussian noise is not
important for the purpose of this paper,
we only prove this assumption under independent
Gaussian noise, which can be directly calculated.
\begin{proposition} \label{prop:gaussian}
Assume the noise vector $\epsilon$ are independent Gaussians: $\epsilon_i -\rE \epsilon_i \sim N(0,\sigma_i^2)$, where each $\sigma_i \leq \sigma$ ($i=1,\ldots,n$). Then Assumption~\ref{assump:noise} holds with
$a=\sigma$ and $b=\sqrt{2} \sigma$.
\end{proposition}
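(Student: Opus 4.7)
The plan is to reduce the quantity in question to a Lipschitz function of a standard Gaussian vector and then apply the Gaussian concentration inequality.

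Let $\xi = \epsilon - \rE \epsilon$, so that $\xi \sim N(0,\Sigma)$ with $\Sigma = \ddiag(\sigma_1^2,\ldots,\sigma_n^2) \preceq \sigma^2 I_n$. Abbreviate $A = (X_{G_j}^\top X_{G_j})^{-0.5} X_{G_j}^\top$, which is a $k_j \times n$ matrix. The first key observation I would record is that $A$ has orthonormal rows: a direct computation gives $A A^\top = (X_{G_j}^\top X_{G_j})^{-0.5} (X_{G_j}^\top X_{G_j}) (X_{G_j}^\top X_{G_j})^{-0.5} = I_{k_j}$, so in particular $\|A\|_{\mathrm{op}} = 1$.

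Next I would write $\xi = \Sigma^{1/2} Z$ with $Z \sim N(0,I_n)$, and set $f(Z) = \|A \Sigma^{1/2} Z\|_2 = \|A\xi\|_2$. Since the map $z \mapsto \|Mz\|_2$ is $\|M\|_{\mathrm{op}}$-Lipschitz for any matrix $M$, $f$ is $L$-Lipschitz with $L \leq \|A\|_{\mathrm{op}} \cdot \|\Sigma^{1/2}\|_{\mathrm{op}} \leq \sigma$. The Gaussian concentration inequality (Borell--Cirelson--Ibragimov--Sudakov) then yields
\[
\pr\!\left( f(Z) \geq \rE f(Z) + t \right) \leq e^{-t^2/(2\sigma^2)}
\quad \text{for all } t \geq 0.
\]

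To control the expectation, Jensen's inequality gives $\rE f(Z) \leq \sqrt{\rE f(Z)^2} = \sqrt{\tr(A \Sigma A^\top)}$. Using $\Sigma \preceq \sigma^2 I_n$ and the cyclic property of trace,
\[
\tr(A \Sigma A^\top) = \tr(A^\top A \, \Sigma) \leq \sigma^2 \tr(A^\top A) = \sigma^2 \tr(A A^\top) = \sigma^2 k_j,
\]
so $\rE f(Z) \leq \sigma \sqrt{k_j}$. Finally, I would choose $t = \sqrt{2}\,\sigma\sqrt{-\ln \eta}$, for which $e^{-t^2/(2\sigma^2)} = \eta$, and combine the two bounds to conclude $\|A\xi\|_2 \leq \sigma \sqrt{k_j} + \sqrt{2}\,\sigma \sqrt{-\ln \eta}$ with probability at least $1-\eta$, giving $a = \sigma$ and $b = \sqrt{2}\sigma$. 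There is no real obstacle here: the only step that requires care is verifying $AA^\top = I_{k_j}$ (so that both the Lipschitz constant bound and the trace bound come out cleanly as $\sigma^2 k_j$); once that is in hand, Gaussian Lipschitz concentration does all the work.
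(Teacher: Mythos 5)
Your proof is correct, but it takes a genuinely different route from the paper's. Both arguments begin the same way, by exploiting $AA^\top = I_{k_j}$ for $A=(X_{G_j}^\top X_{G_j})^{-0.5}X_{G_j}^\top$. From there the paper whitens the noise explicitly: it constructs $Z = X_{G_j}(X_{G_j}^\top \Sigma X_{G_j})^{-0.5}$ so that $\xi = Z^\top(\epsilon - \rE\epsilon)$ is an exact standard Gaussian in $\R^{k_j}$, bounds the target norm by $\sigma\|\xi\|_2$ via an eigenvalue comparison, and then establishes the one-sided $\chi^2$ tail $\pr(\|\xi\|_2 \geq \delta) \leq \sqrt{0.5}\,e^{-(\delta-\sqrt{k_j})^2/2}$ by direct integration of the density together with a Stirling lower bound on the Gamma function. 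You instead treat $Z\mapsto\|A\Sigma^{1/2}Z\|_2$ as a $\sigma$-Lipschitz function of a standard Gaussian vector, apply Borell--TIS, and control the mean by Jensen and a trace computation; the constants $a=\sigma$, $b=\sqrt{2}\sigma$ come out identically. Your version is shorter, sidesteps the delicate integral estimates, and generalizes immediately to correlated Gaussian noise with covariance dominated by $\sigma^2 I_n$; its cost is that it invokes Gaussian concentration as a black box, whereas the paper's elementary tail computation is self-contained and is reused verbatim (as Lemma B.2, which needs the two-sided deviation bound) in the proof of the Group-RIP proposition, so the integration work is not wasted there. All the individual steps you record (the operator-norm bound, the Lipschitz estimate, $\tr(A\Sigma A^\top)\leq\sigma^2 k_j$, and the choice $t=\sqrt{2}\sigma\sqrt{-\ln\eta}$) check out.
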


The next assumption handles the case that true target is not
exactly sparse. That is, we only assume that
$X \bar{\beta} \approx \rE \by$.
\begin{assumption}[Group approximation error condition]
\label{assump:approx}
There exist $\delta a, \delta b \geq 0$
such that for all group $j \in \{1,\ldots,m\}$:
the projection of error mean $\rE \epsilon$ to the $j$-th group
is bounded by:
\[
\|(X_{G_j}^\top X_{G_j})^{-0.5} X_{G_j}^\top \rE \epsilon\|_2/\sqrt{n} \leq
\sqrt{k_j} \delta a + \delta b .
\]
\end{assumption}

As mentioned earlier, we do not assume that the noise is zero-mean.
Hence $\rE \epsilon$ may not equal zero. 
In other words, 
this condition considers the situation that the true target is not exactly
sparse. It resembles algebraic noise in \cite{Zhang07-l1} but
takes the group structure into account. Similar to  \cite{Zhang07-l1},
we have the following result.
\begin{proposition} \label{prop:approx}
Consider a $(g,k)$ strongly group sparse coefficient vector
$\bb$ such that 
\[
\frac{1}{n}\|X \bb - \rE \by\|_2^2 \leq \Delta^2 ,
\]
and $a_0, b_0 \geq 0$. Then there exists $(g',k')$ strongly group sparse
$\bb'$ such that $ k' a_0^2 + g' b_0^2 \leq 2(k a_0^2 + g b_0^2)$, 
$\|X \bb' - \rE \by\|_2 \leq \|X \bb - \rE \by\|_2$,
$\Fr(\bb) \subset \Fr(\bb')$, 
and for all group $j$:
\[
\|(X_{G_j}^\top X_{G_j})^{-0.5} X_{G_j}^\top (X \bb'-\rE \by)\|_2/\sqrt{n} \leq
(a_0 \sqrt{k_j} + b_0) \Delta/\sqrt{k a_0^2 + b_0^2} .
\]
\end{proposition}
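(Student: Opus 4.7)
Proof proposal (sketch).

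The plan is a greedy support-expansion argument. Let $S_0 = S$ be the initial set of $g$ groups with $\Fr(\bb)\subset G_{S_0}$, set $\bb^{(0)}=\bb$, and denote the residual $r^{(t)}=\rE\by - X\bb^{(t)}$. At each stage I check whether some group $j\notin S_t$ violates the desired per-group bound
\[
\frac{1}{\sqrt{n}}\,\bigl\|(X_{G_j}^\top X_{G_j})^{-1/2} X_{G_j}^\top r^{(t)}\bigr\|_2 \;>\; \frac{(a_0\sqrt{k_j}+b_0)\,\Delta}{\sqrt{k a_0^2+g b_0^2}}.
\]
If so, I set $S_{t+1}=S_t\cup\{j\}$ and update $\bb^{(t+1)}$ by adding to the $G_j$-block the least-squares correction $(X_{G_j}^\top X_{G_j})^{-1}X_{G_j}^\top r^{(t)}$, leaving the other blocks unchanged. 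If no such $j$ exists, I stop and take $\bb'=\bb^{(t)}$. By construction $\Fr(\bb)\subset \Fr(\bb')$, and the terminal residual satisfies the per-group bound in the statement.

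The key accounting step uses two exact/inequality identities. First, a direct expansion gives
\[
\|r^{(t+1)}\|_2^2 \;=\; \|r^{(t)}\|_2^2 \;-\; \bigl\|(X_{G_j}^\top X_{G_j})^{-1/2} X_{G_j}^\top r^{(t)}\bigr\|_2^2,
\]
since subtracting the orthogonal projection of $r^{(t)}$ onto $\mathrm{range}(X_{G_j})$ removes exactly that squared norm. Combined with the violation condition, each added group produces the residual-squared drop
\[
\|r^{(t)}\|_2^2 - \|r^{(t+1)}\|_2^2 \;\geq\; \frac{n\,(a_0\sqrt{k_j}+b_0)^2\,\Delta^2}{k a_0^2+g b_0^2}
\;\geq\; \frac{n\,(a_0^2 k_j + b_0^2)\,\Delta^2}{k a_0^2+g b_0^2},
\]
using the elementary inequality $(a_0\sqrt{k_j}+b_0)^2 \geq a_0^2 k_j + b_0^2$. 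Meanwhile, adding group $j$ increases the running totals by exactly $k_j$ (to the support size) and $1$ (to the group count), hence increments $k'a_0^2+g'b_0^2$ by exactly $a_0^2 k_j + b_0^2$.

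Summing over all iterations and using $\|r^{(0)}\|_2^2=\|X\bb-\rE\by\|_2^2\leq n\Delta^2$, the monotone decrease of the residual yields
\[
\sum_t \bigl(a_0^2 k_{j_t} + b_0^2\bigr) \;\leq\; k a_0^2 + g b_0^2,
\]
so $k'a_0^2+g'b_0^2 \leq 2(k a_0^2 + g b_0^2)$, and in particular the greedy process halts after finitely many steps. The residual inequality $\|X\bb'-\rE\by\|_2 \leq \|X\bb-\rE\by\|_2$ is immediate from the pointwise monotonicity. Upon termination, the stopping rule is exactly the per-group bound in the conclusion, so $\bb'$ has all four required properties.

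The main obstacle I expect is simply making sure the bookkeeping yields the stated constant $2$: this hinges on the two elementary facts $(a_0\sqrt{k_j}+b_0)^2\geq a_0^2 k_j+b_0^2$ and that the least-squares correction within $G_j$ reduces the residual squared by the \emph{full} squared projection onto $\mathrm{range}(X_{G_j})$ (not just its component orthogonal to the previous support). No other step requires any delicate estimate. I note that as written, the denominator in the conclusion appears as $\sqrt{k a_0^2+b_0^2}$, which I read as a typo for $\sqrt{k a_0^2+g b_0^2}$ — the argument above produces the latter.
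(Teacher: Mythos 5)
Your overall strategy---greedy group-wise least-squares corrections, with the accounting that each violating step drops $\|r\|_2^2$ by at least $n(a_0^2k_j+b_0^2)\Delta^2/(ka_0^2+gb_0^2)$ while incrementing $k'a_0^2+g'b_0^2$ by at most $a_0^2k_j+b_0^2$---is exactly the paper's argument, and that part of the bookkeeping is sound. But there is one genuine gap: you only test and correct groups $j\notin S_t$, so at termination you have established the per-group bound only for groups \emph{outside} the final support. The proposition requires it for \emph{all} groups $j$, and the groups inside the support are precisely the dangerous ones: the original groups in $S$ are never touched by your procedure, and a group you corrected at stage $t$ has its residual projection perturbed again by every later correction. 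Concretely, take $\rE\by=0$ and $\bb$ supported on a single small group $G_1$ with $k_1=1$, $b_0=0$, $|G_S|=k$ large, and $\|X\bb\|_2=\sqrt{n}\Delta$; then $r^{(0)}=X\bb$ lies in $\mathrm{range}(X_{G_1})$, so $\|(X_{G_1}^\top X_{G_1})^{-1/2}X_{G_1}^\top r^{(0)}\|_2=\sqrt{n}\Delta$, which exceeds the required $\sqrt{n}\,a_0\Delta/\sqrt{k}$ by a factor $\sqrt{k}$---yet your procedure may return $\bb'=\bb$ because $1\in S_0$. The fix is what the paper does: take the group achieving the maximum of $\|(X_{G_j}^\top X_{G_j})^{-1/2}X_{G_j}^\top r\|_2/\sqrt{k_ja_0^2+b_0^2}$ over \emph{all} $j$ (allowing re-selection), and replace your ``increments by exactly $k_j$ and $1$'' with the inequalities $k^{(\ell)}-k^{(\ell-1)}\le k_j$ and $g^{(\ell)}-g^{(\ell-1)}\le 1$; re-selected groups then contribute zero to the sparsity count while still driving the residual down, so the factor-$2$ bound survives unchanged.

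A smaller point: the denominator $\sqrt{ka_0^2+b_0^2}$ in the statement is not a typo. The paper runs the whole accounting with that normalization and concludes $(k^{(t+1)}-k)a_0^2+(g^{(t+1)}-g)b_0^2\le ka_0^2+b_0^2\le ka_0^2+gb_0^2$, which still gives $k'a_0^2+g'b_0^2\le 2(ka_0^2+gb_0^2)$ since $g\ge 1$. Your variant with $\sqrt{ka_0^2+gb_0^2}$ proves a slightly stronger per-group bound and is equally valid, but the stated version is internally consistent.
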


The proposition shows that if the approximation error of $\bb$
is $\Delta=\|X \bb - \rE \by\|_2/\sqrt{n}$, then we may
find an alternative target $\bb'$ with similar sparsity for which we can
take $\delta a = a_0 \Delta/\sqrt{k a_0^2 + b_0^2}$ and $\delta b= b_0 \Delta/\sqrt{k a_0^2 + b_0^2}$
in Assumption~\ref{assump:approx}. This means that
in Theorem~\ref{thm:group} below,
by choosing $a_0=a$ and $b_0=b \sqrt{\ln (m/\eta)}$, 
the contribution of the approximation error to the
reconstruction error $\|\hb-\bb\|_2$ is $O(\Delta)$.
Note that
this assumption does not show the benefit of group Lasso over standard Lasso.
Therefore in order to compare our results to that of the standard Lasso,
one may consider the simple situation where $\delta a = \delta b=0$.
That is, the target is exactly sparse.
The only reason to include
Assumption~\ref{assump:approx} is to illustrate that our analysis can handle
approximate sparsity.

The last assumption is a sparse eigenvalue condition, used in the
modern analysis of Lasso (e.g., \cite{BiRiTs07,Zhang07-l1}).
It is also closely
related to (and slightly weaker than) the RIP (restricted isometry property)
assumption \cite{CandTao05-rip} in the compressive sensing literature.
This assumption takes advantage of group structure, and can be
considered as (a weaker version of) group RIP. We introduce a definition
before stating the assumption.
\begin{definition}
For all $F \subset \{1,\ldots,p\}$, define
\begin{align*}
\rho_-(F)=&\inf \left\{\frac{1}{n}\|X \beta\|_2^2/\|\beta\|_2^2: \Fr(\beta) \subset F\right\} , \\
  \rho_+(F)=&\sup \left\{\frac{1}{n}\|X \beta\|_2^2/\|\beta\|_2^2: \Fr(\beta) \subset F\right\} .
\end{align*}
Moreover, for all $1 \leq s \leq p$, define
  \begin{align*}
    \rho_-(s) =& \inf \{\rho_-(G_S): S \subset \{1,\ldots,m\},
    |G_S| \leq s\} , \\
    \rho_+(s) =& \sup \{\rho_+(G_S): S \subset \{1,\ldots,m\} ,
    |G_S| \leq s\} .
  \end{align*}
\end{definition}

\begin{assumption}[Group sparse eigenvalue condition]
\label{assump:eigen}
There exist $s,c >0$ such that
\[
\frac{\rho_+(s)-\rho_-(2s)}{\rho_-(s)} \leq c .
\]
\end{assumption}

Assumption~\ref{assump:eigen} illustrates another advantage of group Lasso
over standard Lasso. Since we only consider eigenvalues for sub-matrices
consistent with the group structure $\{G_j\}$, the ratio
$\rho_+(s)/\rho_-(s)$ can be significantly smaller than the corresponding
ratio for Lasso (which considers all subsets of $\{1,\ldots,p\}$
up to size $s$).
For example, assume that all group sizes are identical
$k_1=\ldots = k_m=k_0$, and $s$ is a multiple of $k_0$.
For random projections used in compressive sensing applications,
only $n= O(s + (s/k_0) \ln m)$ projections are needed for Assumption~\ref{assump:eigen} to hold. In comparison, for standard Lasso, we need
$n= O(s \ln p)$ projections. The difference can be significant when $p$ and
$k_0$ are large.
More precisely, we have the following random projection sample complexity
bound for the group sparse eigenvalue condition.
Although we assume Gaussian random matrix in order to state explicit
constants, it is clear that
similar results hold for other sub-Gaussian random matrices.

\begin{proposition} [Group-RIP]\label{prop:rip}
Suppose that elements in $X$ are iid standard Gaussian random variables
$N(0,1)$.
For any $t>0$ and $\delta \in (0,1)$, let
\[
n \geq \frac{8}{\delta^2}
[ \ln 3 + t  + k \ln (1+ 8/\delta) + g \ln (e m/g) ] .
\]
Then with probability at least $1-e^{-t}$,
the random matrix $X \in \mathbb{R}^{n \times p}$
satisfies the following group-RIP inequality
for all $(g,k)$ strongly group-sparse vector $\bb \in \R^p$,
\begin{equation}
 (1 -\delta)\| \bb \|_{2} \leq
\frac{1}{\sqrt{n}}
\| X \bb \|_{2} \leq (1+\delta)\| \bb \|_{2}. \label{eq:Group-RIP}
\end{equation}

\end{proposition}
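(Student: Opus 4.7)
The plan is to follow the standard covering-number derivation of the Gaussian RIP, but to have the combinatorial union bound range over sets of \emph{groups} rather than over arbitrary $k$-subsets of coordinates---this is what will produce the $g\ln(em/g)$ term in place of the usual $k\ln(ep/k)$. The scheme has three layers: (i) a Gaussian concentration inequality for a single unit vector, (ii) an $\epsilon$-net extension to all vectors supported on a fixed $G_S$, and (iii) a combinatorial union bound over the choices of $S$.

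For step (i), fix $S\subset\{1,\ldots,m\}$ with $|S|\le g$ and $|G_S|\le k$, and fix a unit vector $\beta$ supported on $G_S$. Since the entries of $X$ are iid $N(0,1)$, we have $X\beta\sim N(0,I_n)$, so $\|X\beta\|_2^2$ is $\chi^2_n$. A Laurent--Massart style tail bound yields $\pr\bigl[|n^{-1}\|X\beta\|_2^2-1|>\delta_0\bigr]\le 2e^{-n\delta_0^2/8}$ for $\delta_0\in(0,1)$, which implies the analogous (only slightly weaker) bound on $|n^{-1/2}\|X\beta\|_2-1|$. For step (ii), vectors supported on $G_S$ form a subspace of dimension $|G_S|\le k$, so its unit sphere admits an $\epsilon$-net of cardinality at most $(1+2/\epsilon)^{|G_S|}\le(1+2/\epsilon)^{k}$; choosing $\epsilon=\delta/4$ makes the net size $(1+8/\delta)^k$, which is exactly the factor appearing in the hypothesis. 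Union-bounding the single-vector tail over the net and then passing from the net to the sphere by the standard contraction argument---if $M=\sup\{n^{-1/2}\|X\beta\|_2:\Fr(\beta)\subset G_S,\|\beta\|_2=1\}$, then $M\le (1+\delta_0)/(1-\epsilon)$, and symmetrically for the lower bound---yields the two-sided $(1\pm\delta)$ inequality uniformly over all such $\beta$, on an event of probability at least $1-2(1+8/\delta)^{k}e^{-n\delta^2/c}$ for an appropriate absolute constant $c$.

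For step (iii), the number of candidate index sets $S\subset\{1,\ldots,m\}$ with $|S|\le g$ is at most $\binom{m}{g}\le (em/g)^g$, with a small multiplicative slack that absorbs the $|S|<g$ cases; this slack, combined with the $2$ from the two-sided tail bound and from the net-to-sphere passage, is the source of the $\ln 3$ additive term in the hypothesis. A final union bound gives a failure probability at most $3\cdot(em/g)^g(1+8/\delta)^{k}e^{-n\delta^2/8}$; requiring this to be at most $e^{-t}$ and taking logarithms produces precisely the claimed lower bound $n\ge\frac{8}{\delta^2}[\ln 3+t+k\ln(1+8/\delta)+g\ln(em/g)]$.

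The main obstacle is essentially bookkeeping rather than a conceptually difficult step: one has to coordinate the net radius $\epsilon$, the concentration slack $\delta_0$, and the Laurent--Massart constant so that the final prefactor is exactly $8/\delta^2$ and the additive constant collapses to $\ln 3$, rather than to a larger $O(1)$ constant. No new probabilistic tool is needed beyond what underlies the standard Gaussian RIP; the group structure enters only through the combinatorial factor $(em/g)^g$ in place of $(ep/k)^k$ and through the fact that, for each fixed $S$, the relevant subspace has dimension at most $k$ (rather than $p$).
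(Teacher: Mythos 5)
Your proposal is correct and follows essentially the same route as the paper: an $\varepsilon$-net of size $(1+8/\delta)^k$ on the unit sphere of each subspace supported on $G_S$, a $\chi^2$-type concentration bound for a single vector, and a union bound over the at most $(em/g)^g$ choices of $S$. The only cosmetic difference is in the bookkeeping of constants: the paper proves a two-sided tail bound directly for $\|X\beta\|_2$ (not its square) of the form $3e^{-\epsilon^2/2}$, so the $\ln 3$ comes entirely from that concentration lemma with slack $\epsilon_2=\delta/2$ and net radius $\epsilon_1=\delta/4$, rather than from the squared-norm Laurent--Massart bound you invoke.
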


\section{Main Results}

Our main result is the following
signal recovery (2-norm parameter estimation error) bound for group Lasso.
\begin{theorem} \label{thm:group}
Suppose that Assumption~\ref{assump:noise}, Assumption~\ref{assump:approx},
and Assumption~\ref{assump:eigen} are valid.
Take $\lambda_j= (A \sqrt{k_j} + B)/\sqrt{n}$,
where both $A$ and $B$ can depend on data $\by$.
Given $\eta \in (0,1)$,
with probability larger than $1-\eta$,
if the following conditions hold:
\begin{itemize}
\item $A \geq 4 \max_j \rho_+(G_j)^{1/2} (a + \delta a \sqrt{n})$,
\item $B \geq 4 \max_j \rho_+(G_j)^{1/2} (b \sqrt{\ln (m/\eta)}+ \delta b \sqrt{n})$,
\item $\bar{\beta}$ is a $(g,k)$ strongly group-sparse coefficient vector,
\item $s \geq k+k_0$,
\item Let $\ell=s-(k-k_0)+1$, and $g_\ell= \min \{ |S| : |G_S| \geq \ell, S \subset \{1,\ldots,m\}\}$, we have
\[
c^2 \leq \frac{\ell A^2 + g_\ell B^2}{72(k A^2 +g B^2)} ,
\]
\end{itemize}
then
the solution of (\ref{eq:group-lasso}) satisfies:
\[
\|\hb-\bb\|_2 \leq \frac{\sqrt{4.5}}{\rho_-(s) \sqrt{n}} (1+0.25 c^{-1}) \sqrt{A^2 k + g B^2} .
\]
\end{theorem}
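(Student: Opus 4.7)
The strategy is the now-standard Lasso template---basic inequality, noise control, cone condition, sparse-eigenvalue argument---adapted to the group structure. The group structure enters in two places: the noise is controlled by a whitened projection per group (yielding the $\sqrt{\ln(m/\eta)}$ dependence in $B$ rather than $\sqrt{\ln(p/\eta)}$), and the final step invokes Assumption~\ref{assump:eigen} on unions of groups instead of the usual coordinate-sparsity version.

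Step~1 (basic inequality and noise). Using optimality of $\hb$ in (\ref{eq:group-lasso}) and $\by = X\bb + \epsilon$, setting $\Delta := \hb - \bb$ gives
\[
\frac{1}{n}\|X\Delta\|_2^2 \;\leq\; \frac{2}{n}\epsilon^\top X\Delta + \sum_{j=1}^m \lambda_j\bigl(\|\bb_{G_j}\|_2 - \|\hb_{G_j}\|_2\bigr).
\]
Split $\epsilon = (\epsilon - \rE\epsilon) + \rE\epsilon$ and bound the noise group by group by Cauchy--Schwarz in whitened coordinates,
\[
\frac{1}{n}\epsilon^\top X_{G_j}\Delta_{G_j} \;\leq\; \frac{1}{n}\|(X_{G_j}^\top X_{G_j})^{-1/2} X_{G_j}^\top \epsilon\|_2 \cdot \|X_{G_j}\Delta_{G_j}\|_2 ,
\]
followed by $\|X_{G_j}\Delta_{G_j}\|_2 \leq \sqrt{n\,\rho_+(G_j)}\,\|\Delta_{G_j}\|_2$. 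Assumption~\ref{assump:noise} applied with $\eta\mapsto\eta/m$ (union bound over the $m$ groups) together with Assumption~\ref{assump:approx} and the stipulated lower bounds on $A$ and $B$ then yield $\frac{1}{n}\epsilon^\top X_{G_j}\Delta_{G_j} \leq \frac{1}{4}\lambda_j\|\Delta_{G_j}\|_2$ simultaneously for all $j$, with probability $\geq 1-\eta$; summing gives $\frac{2}{n}\epsilon^\top X\Delta \leq \frac{1}{2}\sum_j\lambda_j\|\Delta_{G_j}\|_2$.

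Step~2 (cone condition). Let $S$ be the set of groups with $\Fr(\bb)\subset G_S$, $|S|\leq g$, $|G_S|\leq k$. Since $\bb_{G_j}=0$ for $j\notin S$ and the reverse triangle inequality gives $\|\bb_{G_j}\|_2 - \|\hb_{G_j}\|_2 \leq \|\Delta_{G_j}\|_2$ for $j\in S$, the basic inequality becomes
\[
\frac{1}{n}\|X\Delta\|_2^2 + \frac{1}{2}\sum_{j\notin S}\lambda_j\|\Delta_{G_j}\|_2 \;\leq\; \frac{3}{2}\sum_{j\in S}\lambda_j\|\Delta_{G_j}\|_2 .
\]
Discarding $\|X\Delta\|_2^2$ yields $\sum_{j\notin S}\lambda_j\|\Delta_{G_j}\|_2 \leq 3\sum_{j\in S}\lambda_j\|\Delta_{G_j}\|_2$, and $\sum_{j\in S}\lambda_j^2 \leq 2(A^2 k + gB^2)/n$ combined with Cauchy--Schwarz gives $\sum_{j\in S}\lambda_j\|\Delta_{G_j}\|_2 \leq \sqrt{2(A^2k + gB^2)/n}\,\|\Delta_{G_S}\|_2$.

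Step~3 (sparse eigenvalue and conclusion; the hard step). Order the off-support groups in decreasing $\|\Delta_{G_j}\|_2/\lambda_j$ and aggregate them into consecutive blocks $T_1, T_2, \ldots$ each of size $g_\ell$ (covering roughly $\ell$ variables). Then $|G_{S\cup T_1}| \leq s$ and $|G_{T_{r-1}\cup T_r}| \leq 2s$ for $r\geq 2$, so Assumption~\ref{assump:eigen} is applicable: the head block gives $\frac{1}{n}\|X\Delta_{G_{S\cup T_1}}\|_2^2 \geq \rho_-(s)\|\Delta_{G_{S\cup T_1}}\|_2^2$, and cross terms between $G_{S\cup T_1}$ and each $G_{T_r}$ are controlled using $\rho_+(s) - \rho_-(2s) \leq c\,\rho_-(s)$. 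The ordering lets one replace each tail block's norm by a weighted average involving its predecessor block, and the hypothesis $c^2 \leq (\ell A^2 + g_\ell B^2)/(72(kA^2 + gB^2))$ is precisely calibrated so that $\sum_{r\geq 2}\|\Delta_{G_{T_r}}\|_2$ becomes absorbable (this is where the $0.25\,c^{-1}$ factor emerges). Combining with Step~2 yields a quadratic inequality in $\|\Delta_{G_{S\cup T_1}}\|_2$ that solves to the advertised bound. This last step is the delicate one: with unequal group sizes, the partition must be chosen so that $\ell$ and $g_\ell$ appear naturally, the averaging inequality must combine the weights $\lambda_j = (A\sqrt{k_j}+B)/\sqrt{n}$ into the ratio $(\ell A^2+g_\ell B^2)/(kA^2+gB^2)$ controlled by the hypothesis on $c$, and the constants $72$ and $\sqrt{4.5}$ must be tracked cleanly through the head/tail decomposition. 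Steps~1 and~2 are by comparison mechanical.
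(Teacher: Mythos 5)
Your proposal follows essentially the same route as the paper: group-wise whitened noise control with a union bound over the $m$ groups (giving the $\sqrt{\ln(m/\eta)}$ term in $B$), a cone condition with constant $3$, ordering the off-support groups by $\|\Delta_{G_j}\|_2/\lambda_j$ and blocking them so each block covers roughly $\ell$ variables, and a restricted-eigenvalue/cross-term argument closed by a quadratic inequality---this is exactly the content of the paper's Lemmas on the block decomposition, the group $L_1$ cone bound, and the combined estimation bound. The only cosmetic difference is that you start from the basic inequality of optimality while the paper tests the KKT stationarity condition against the restricted vector $(\hb-\bb)_{G}$ (which is what makes only a single cross term appear and the stated constants come out exactly); with your starting point the constants would need minor re-tuning, but the approach is the same.
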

The first four conditions of the theorem are not critical, as they are
just definitions and choices for $\lambda_j$. The fifth assumption
is critical, which means that the group sparse eigenvalue condition has
to be satisfied with some $c$ that is not too large.
In order to satisfy the condition, $\ell$ should be chosen relatively large
as the right hand side is linear in $\ell$. However, this implies that
$s$ also grow linearly. It is possible to find $s$ so that the condition is
satisfied when $c^2$ in Assumption~\ref{assump:eigen} grows sub-linearly in $s$.
Consider the situation that $\delta a=\delta b=0$. If the conditions of
Theorem~\ref{thm:group} is satisfied, then
\[
\|\hb-\bb\|_2^2 = O((k + g \ln (m/\eta))/n) .
\]
In comparison, The Lasso estimator can only achieve the bound
\[
\|\hb_{L1}-\bb\|_2^2 = O((\|\bb\|_0 \ln (p/\eta))/n).
\]
If $k/\|\bb\|_0 \ll \ln (p/\eta)$ (which means that the group structure is useful) and
$g \ll \|\bb\|_0$, then the group Lasso is superior. This is consistent with
intuition. However, if $k \gg \|\bb\|_0 \ln (p/\eta)$, then group Lasso
is inferior. This happens when the signal is not strongly group sparse.

Theorem~\ref{thm:group} also suggests that if the group sizes are not even,
then group Lasso may not work well when
the signal is contained in small sized groups.
This is because in such case $g_\ell$ can be significantly smaller than
$g$ even with relatively large $\ell$, which means we have to choose a large
$s$ and small $c$, implying a poor bound.
This prediction is confirmed in Section~\ref{exp:uneven}
using simulated data.
Intuitively, group Lasso favors large sized groups because the 2-norm
regularization for large group size is weaker.
Adjusting regularization parameters $\lambda_j$ not only
fails to work in theory, but also impractical since it is unrealistic to
tune many parameters.
This unstable behavior with respect to uneven group size
may be regarded as another drawback of the group Lasso
formulation.

In the following, we present two simplifications of Theorem~\ref{thm:group}
that are easier to interpret. The first is the compressive sensing case,
which does not consider stochastic noise.
\begin{corollary}[Compressive sensing] \label{cor:cs}
  Suppose that Assumption~\ref{assump:noise} and Assumption~\ref{assump:approx}
  are valid with $a=b=\delta b=0$.
  Take $\lambda_j= 4 \sqrt{k_j} \max_j \rho_+(G_j)^{1/2} \delta a$.
  Let $\bar{\beta}$ be a $(k,g)$ strongly group-sparse signal,
  $\ell=k$, and $s=2k+k_0-1$.
  If $(\rho_+(s)-\rho_-(2s))/\rho_-(s) \leq 1/\sqrt{72}$, then
  the solution of (\ref{eq:group-lasso}) satisfies:
  \[
  \|\hb-\bb\|_2 \leq
  \frac{6\sqrt{2} + 18}{\rho_-(s)}
  \max_j \rho_+(G_j)^{1/2} \delta a \sqrt{k} .
  \]
\end{corollary}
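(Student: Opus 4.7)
The plan is to derive the corollary directly by specializing the master Theorem~\ref{thm:group} to the noiseless setting, so the proof reduces to (i) selecting the parameters $A,B,s,\ell$ in the theorem to match those of the corollary, (ii) checking each hypothesis of the theorem, and (iii) arithmetically simplifying the bound.

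First I would read off $A$ and $B$ from the prescribed penalty. Since $\lambda_j=(A\sqrt{k_j}+B)/\sqrt{n}$ in the theorem and $\lambda_j=4\sqrt{k_j}\max_j\rho_+(G_j)^{1/2}\delta a$ in the corollary, the natural match is
\[
A=4\max_j\rho_+(G_j)^{1/2}\,\delta a\sqrt{n},\qquad B=0.
\]
With $a=b=\delta b=0$, the first two bulleted conditions of Theorem~\ref{thm:group} then become $A\ge 4\max_j\rho_+(G_j)^{1/2}\delta a\sqrt{n}$ and $B\ge 0$, both met with equality. Strong $(g,k)$ group sparsity of $\bb$ is given, and the choice $s=2k+k_0-1$ satisfies $s\ge k+k_0$ as soon as $k\ge 1$.

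The last condition is where the real verification happens. With $B=0$ it reduces to $c^2\le \ell/(72k)$, so I only need a lower bound $\ell\ge k$. Computing $\ell=s-(k-k_0)+1=k+2k_0\ge k$ (the value $\ell=k$ in the corollary is this simplified lower bound on $\ell$; $g_\ell$ and $B$ drop out because $B=0$). The hypothesis $(\rho_+(s)-\rho_-(2s))/\rho_-(s)\le 1/\sqrt{72}$ of the corollary exactly says $c^2=1/72$ works in Assumption~\ref{assump:eigen}, and $1/72\le \ell/(72k)$ follows. So all hypotheses of Theorem~\ref{thm:group} are in force (with $\eta$ irrelevant here because Assumption~\ref{assump:noise} is trivially true when $a=b=0$).

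Finally I would just substitute into the theorem's conclusion
\[
\|\hb-\bb\|_2\le \frac{\sqrt{4.5}}{\rho_-(s)\sqrt{n}}\bigl(1+0.25c^{-1}\bigr)\sqrt{A^2 k+gB^2}.
\]
Since $B=0$, $\sqrt{A^2k+gB^2}=A\sqrt{k}=4\sqrt{n}\max_j\rho_+(G_j)^{1/2}\delta a\sqrt{k}$, and $\sqrt{n}$ cancels. With $c^{-1}=\sqrt{72}=6\sqrt{2}$ the numerical prefactor works out to $4\sqrt{4.5}(1+1.5\sqrt{2})=6\sqrt{2}(1+1.5\sqrt{2})=6\sqrt{2}+18$, which matches the stated constant. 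The main thing to be careful about is the $\ell$/$g_\ell$ bookkeeping in the fifth hypothesis; everything else is pure substitution.
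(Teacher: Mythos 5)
Your proof is correct and is exactly the intended argument: the paper gives no separate proof of Corollary~\ref{cor:cs}, which is meant as a direct specialization of Theorem~\ref{thm:group} with $A=4\max_j\rho_+(G_j)^{1/2}\delta a\sqrt{n}$, $B=0$, and $c=1/\sqrt{72}$. Your handling of the $\ell$ bookkeeping (observing that only $\ell\geq k$ is needed for the fifth condition once $B=0$, which holds under either reading of the theorem's formula for $\ell$) and the arithmetic $4\sqrt{4.5}\,(1+1.5\sqrt{2})=6\sqrt{2}+18$ are both right.
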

If $\delta a=0$, then we can achieve exact recovery. Moreover,
Proposition~\ref{prop:approx} implies that we may choose 
a target with similar sparsity
such that $\delta a \sqrt{k} =O(\|X \bb - \rE \by\|_2 /\sqrt{n})$.
This implies a bound
\[
  \|\hb-\bb\|_2 =O(\| X \bb - \rE \by\|_2/\sqrt{n}) .
\]
If we have even sized groups, the number of samples $n$ required for
Corollary~\ref{cor:cs} to hold 
(that is, $(\rho_+(s)-\rho_-(2s))/\rho_-(s) \leq 1/\sqrt{72}$)
is $O(k+ g \ln (m/g))$, where $g=k/k_0$.
In comparison, although a similar result holds for Lasso, it 
requires sample size
of order $\|\bb\|_0 \ln (p/\|\bb\|_0)$. Again, group Lasso has a significant
advantage if $k/\|\bb\|_0 \ll \ln (p/\|\bb\|_0)$, $g \ll \|\bb\|_0$, and $p$ is large.

The following corollary is for even sized groups, and the result is
simpler to interpret. For standard Lasso, $B = O(\sqrt{\ln p})$, and for group
Lasso, $B=O(\sqrt{\ln m})$. The benefit of group Lasso is the division
of $B^2$ by $k_0$ in the bound, which is a significant improvement when
the dimensionality $p$ is large. 
The disadvantage of group Lasso is that the signal sparsity
$\|\bb\|_0$ is replaced by the group sparsity $k$.
This is not an artifact of our analysis, but rather a fundamental
drawback inherent to the group Lasso formulation. The effect is observable,
as shown in our simulation studies.
\begin{corollary}[Even group size]
 Suppose that Assumption~\ref{assump:noise} and Assumption~\ref{assump:approx}
are valid.
Assume also that all groups are of equal sizes:
$k_0=k_j$ for $j=1,\ldots,m$. Given $\eta \in (0,1)$,
let
\[
\lambda_j= (A \sqrt{k_0} + B)/\sqrt{n} ,
\]
where $A \geq 4 \max_j \rho_+(G_j)^{1/2} (a + \delta a \sqrt{n})$
and $B \geq 4 \max_j \rho_+(G_j)^{1/2} (b \sqrt{\ln (m/\eta)} + \delta b \sqrt{n})$.
Let $\bar{\beta}$ be a $(k,k/k_0)$ strongly group-sparse signal.
With probability larger than $1-\eta$,
if
\[
6 \sqrt{2} (\rho_+(k+\ell)-\rho_-(2k+2\ell)) /\rho_-(k+\ell) < \sqrt{\ell/k}
\]
for some $\ell >0$ that is a multiple of $k_0$,
then the solution of (\ref{eq:group-lasso}) satisfies:
\[
\|\hb-\bb\|_2 \leq \rho_-(k+\ell)^{-1} (\sqrt{4.5}+4.5 \ell/k)
\sqrt{A^2 + B^2/k_0}
\sqrt{k/n} .
\]
\end{corollary}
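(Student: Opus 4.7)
The plan is to derive the corollary as a direct specialization of Theorem~\ref{thm:group} to the even-group-size setting $k_1 = \cdots = k_m = k_0$. With all $k_j$ equal, the theorem's regularization choice $\lambda_j = (A\sqrt{k_j}+B)/\sqrt{n}$ reduces to the corollary's formula, and the required lower bounds on $A$ and $B$ coincide. Because each group has size $k_0$, any $(k/k_0,k)$-strongly group-sparse signal lives in $g := k/k_0$ groups of total size $k$, so the group-sparsity hypothesis of the theorem is met. The theorem's requirement $s \geq k + k_0$ is satisfied by setting $s := k + \ell$, since $\ell > 0$ and $k_0 \mid \ell$ together force $\ell \geq k_0$.

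Next, I would translate the theorem's internal quantities into expressions in the corollary's $\ell$. With $s = k+\ell$, the theorem's intermediate parameter becomes $L := s - (k-k_0) + 1 = \ell + k_0 + 1$. Since $|G_S| = k_0\,|S|$ under even group sizes, $g_L = \lceil L/k_0 \rceil = \ell/k_0 + 2$, using that $\ell$ is a multiple of $k_0$. Substituting these together with $g = k/k_0$ gives
\[
\frac{L A^2 + g_L B^2}{72\,(k A^2 + g B^2)} \;=\; \frac{(\ell+k_0+1)A^2 + (\ell/k_0+2)B^2}{72\,k\,(A^2 + B^2/k_0)} \;\geq\; \frac{\ell(A^2+B^2/k_0)}{72\,k\,(A^2+B^2/k_0)} \;=\; \frac{\ell}{72k},
\]
by dropping the positive additive terms in the numerator.

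The corollary's hypothesis $6\sqrt{2}(\rho_+(k+\ell)-\rho_-(2k+2\ell))/\rho_-(k+\ell) < \sqrt{\ell/k}$ rearranges to saying that the eigenvalue ratio in Assumption~\ref{assump:eigen} is strictly less than $\sqrt{\ell/(72k)}$, so by the display above there exists a value of the parameter $c$ that satisfies Assumption~\ref{assump:eigen} simultaneously with the theorem's condition $c^2 \leq (L A^2 + g_L B^2)/(72(k A^2 + g B^2))$. Since $c^{-1}$ appears in the theorem's bound, I would pick $c$ at the upper end of this admissible interval to minimize $0.25 c^{-1}$. Combining this choice with the factorization $\sqrt{A^2 k + g B^2} = \sqrt{k}\,\sqrt{A^2 + B^2/k_0}$ and simplifying $\sqrt{4.5}(1 + 0.25 c^{-1})$ then yields the stated constant.

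The main obstacle is bookkeeping rather than technique: carefully computing $L$ and $g_L$ from the theorem's general definitions in the even-group-size case, verifying that dropping the lower-order terms ($k_0+1$ in the $A^2$ coefficient and $2$ in the $B^2$ coefficient) in the numerator still produces the clean lower bound $\ell/(72k)$, and propagating the chosen $c$ through the constant $\sqrt{4.5}(1 + 0.25 c^{-1})$. Once this reduction is in place, only algebraic substitution into Theorem~\ref{thm:group} remains.
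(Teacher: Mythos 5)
Your reduction is the intended one --- the paper gives no separate proof of this corollary, and specializing Theorem~\ref{thm:group} with $s=k+\ell$ is exactly how it must be derived --- and your bookkeeping is right: $s=k+\ell\geq k+k_0$ because $\ell$ is a positive multiple of $k_0$, the theorem's internal parameter is $L=\ell+k_0+1$ with $g_L=\ell/k_0+2$, and dropping the lower-order terms gives the clean lower bound $\ell/(72k)$ for the right-hand side of the theorem's fifth condition, which the corollary's hypothesis (equivalent to $(\rho_+(s)-\rho_-(2s))/\rho_-(s)<\sqrt{\ell/(72k)}$) lets you meet with $c=\sqrt{\ell/(72k)}$.

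The one place you wave your hands is the last step, and it is precisely where the claim does not close: with $c=\sqrt{\ell/(72k)}$ you get $0.25\,c^{-1}=0.25\sqrt{72k/\ell}=\sqrt{4.5}\sqrt{k/\ell}$, hence $\sqrt{4.5}\,(1+0.25\,c^{-1})=\sqrt{4.5}+4.5\sqrt{k/\ell}$, whereas the corollary states $\sqrt{4.5}+4.5\,\ell/k$. These coincide only when $\ell=k$; for $\ell<k$ the stated constant is strictly smaller than what the theorem delivers, so the corollary as printed does not follow from your (correct) computation. This is almost certainly a typo in the paper ($4.5\,\ell/k$ should read $4.5\sqrt{k/\ell}$), but your proposal asserts that "simplifying \ldots yields the stated constant" without doing the arithmetic; had you carried it out you would have caught the mismatch. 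Everything else --- the identification $g=k/k_0$, the factorization $\sqrt{A^2k+gB^2}=\sqrt{k}\sqrt{A^2+B^2/k_0}$, and $\rho_-(s)=\rho_-(k+\ell)$ --- is fine.
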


\section{Simulation Studies}

We want to verify our theory
by comparing group Lasso to Lasso on simulation data.
For quantitative evaluation, the recovery error is defined as
the relative
difference in 2-norm between the estimated sparse coefficient vector
$\beta_{est}$ and the ground-truth sparse coefficient
$\bb$: $\|\beta_{est}-\bb\|_{2}/\|\bb\|_{2}$.

The regularization parameter $\lambda$ in Lasso is chosen with
five-fold cross validation. In group Lasso, we
simply suppose the regularization parameter
$\lambda_{j}=(\lambda\sqrt{k_{j}})/\sqrt{n}$ for $j=1,2,...,m$.
The regularization parameter $\lambda$ is then chosen with 
five-fold cross validation. Here we set $B=0$
in the formula $\lambda_j=O(A\sqrt{k_j}+B)$. 
Since the relative performance of group Lasso versus standard
Lasso is similar with other values of $B$,
in order to avoid redundancy, we do not
include results with $B \neq 0$.

\subsection{Even group size}
In this set of experiments, the projection matrix $X$ is generated
by creating an $n \times p$ matrix with i.i.d. draws from a standard
Gaussian distribution $N(0, 1)$. For simplicity, the rows of $X$
are normalized to unit magnitude. Zero-mean Gaussian noise with
standard deviation $\sigma=0.01$ is added to the measurements. Our
task is to compare the recovery performance of Lasso and Group
Lasso for these $(g,k)$ strongly group sparse signals. 

\subsubsection{With correct group structure}
In this experiment,  we randomly generate $(g,k)$ strongly group
sparse coefficients with values $\pm 1$, where $p=512$, $k=64$ and
$g=16$. There are 128 groups with even group size of $k_0=4$. Here
the group structure coincides with the signal sparsity:
$k=\|\bb\|_0$.

Figure~\ref{fig:Exp_even_example_correct}
shows an instance of generated sparse coefficient vector
and the recovered results
by Lasso and group Lasso respectively when $n=3k=192$.
Since the sample size $n$ is only three times the signal sparsity
$k$, the standard Lasso does not achieve good recovery results, whereas the
group Lasso achieves near perfect recovery of the original signal.

Figure~\ref{fig:Exp_even_stat_correct} shows the effect of
sample size $n$, where we report the averaged recover error
over 100 random runs for each sample size.
Group Lasso is clearly superior in this case.
These results show that the
the group Lasso can achieve better recovery performance for
$(g,k)$ strongly group sparse signals with fewer measurements,
which is consistent with our theory.

\begin{figure}[htbp]
\centering
\includegraphics[width = 0.7 \columnwidth]{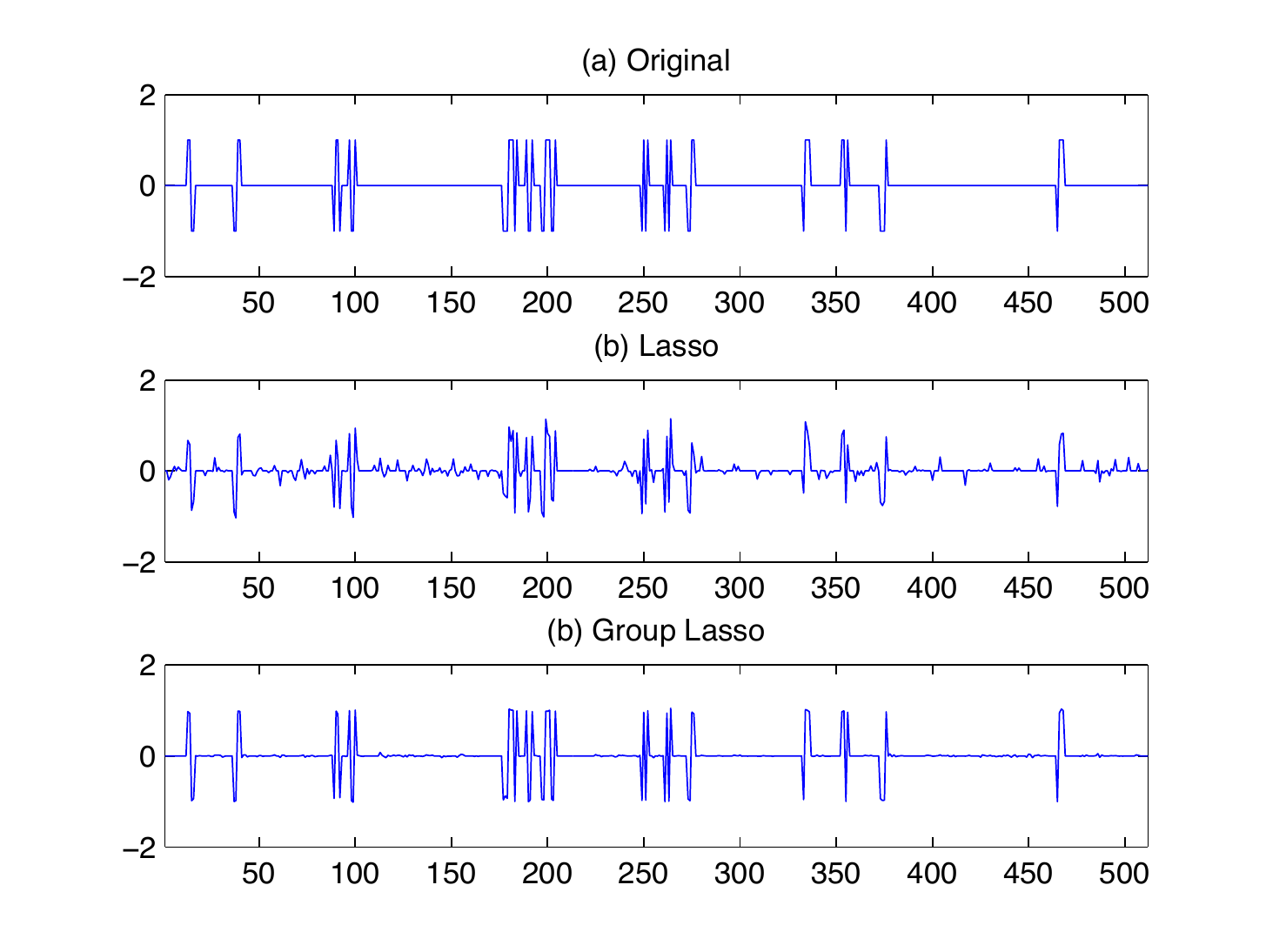}
\caption{Recovery results when the assumed group structure is correct. (a) Original data; (b) results with
Lasso (recovery error is 0.3444); (c) results with Group Lasso
(recovery error is 0.0419)} \label{fig:Exp_even_example_correct}
\end{figure}

\begin{figure}[htbp]
\centering
    \subfigure[]{\label{fig:Exp_even_stat_correct}
        \includegraphics[scale=0.46]{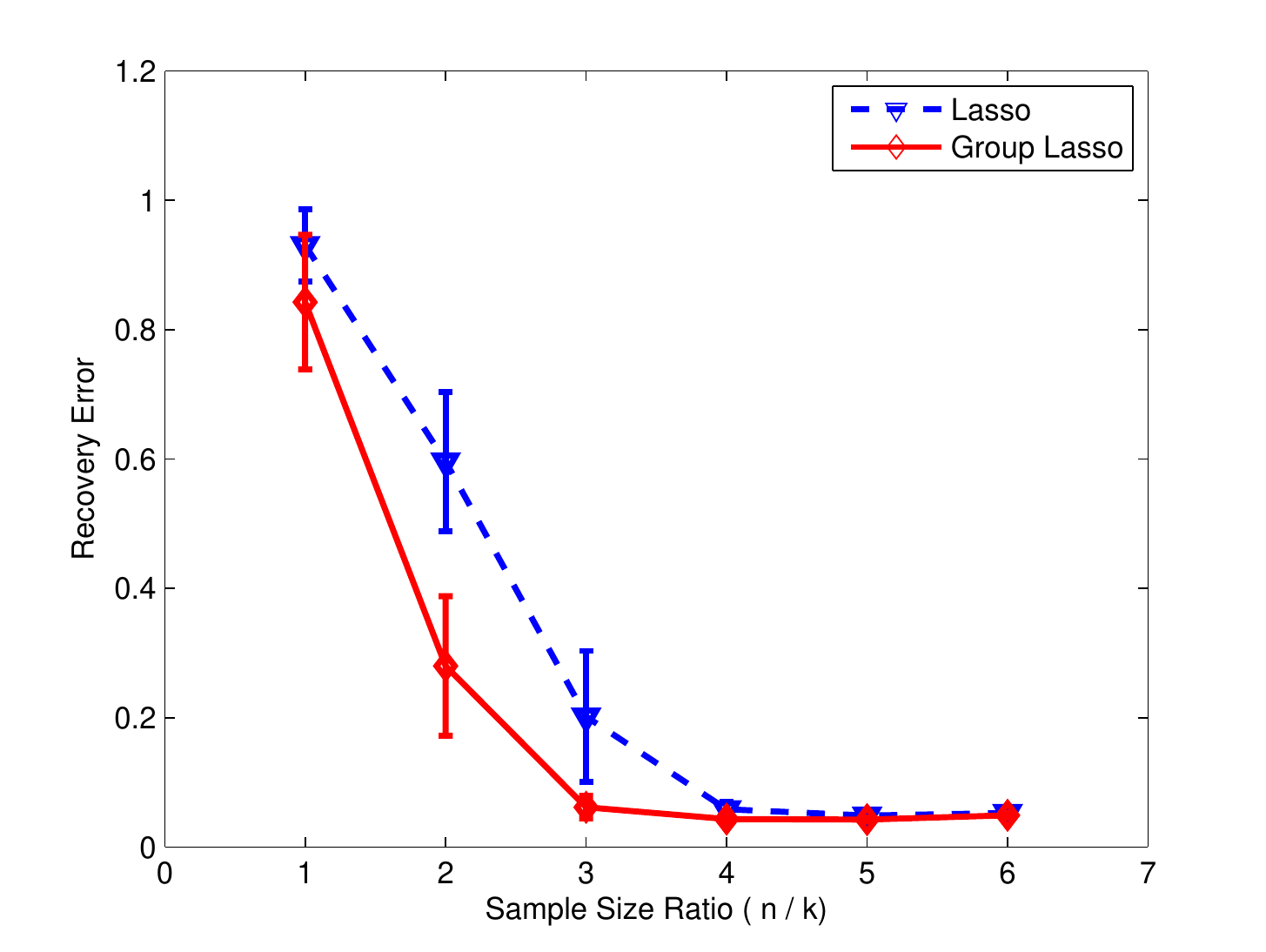}}
    \subfigure[]{\label{fig:Exp_even_stat_correct_g}
        \includegraphics[scale=0.46]{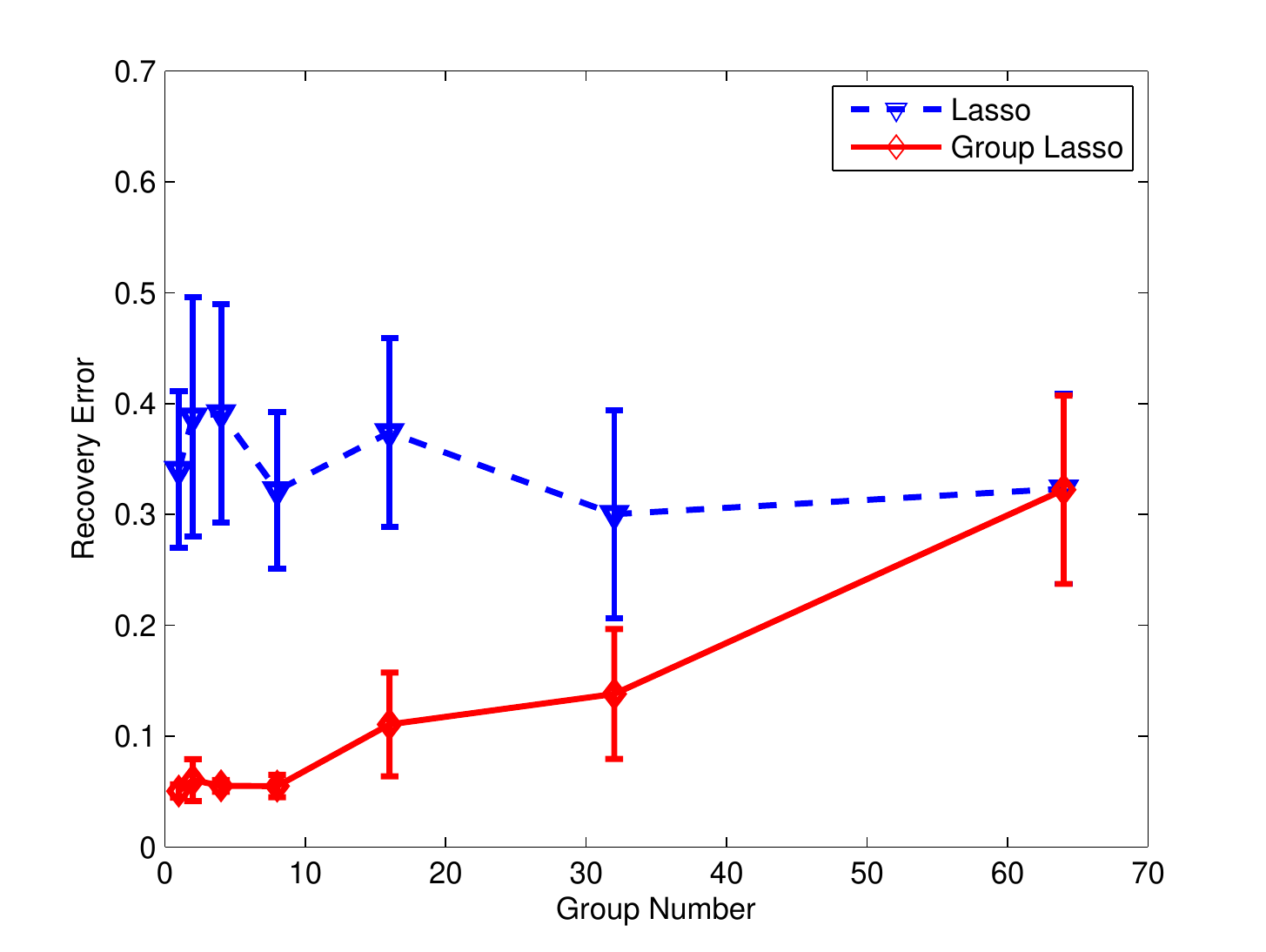}}
\caption{Recovery performance: (a) recovery error vs. sample size
ratio $n/k$; (b) recovery error vs. group number $g$}
\end{figure}

To study the effect of the group number $g$ (with $k$ fixed),
we set the sample size $n=160$ and then change
the group number while keeping other parameters unchanged.
Figure~\ref{fig:Exp_even_stat_correct_g} shows the recovery
performance of the two algorithms, averaged over 100 random runs
for each sample size.
As expected,
the recovery performance for Lasso is independent to the group number within
statistical error.
Moreover, the recovery results for group Lasso
are significantly better when the group number $g$ is much smaller than
the sparsity $k=64$.
When $g=k$, the group Lasso becomes identical to Lasso, which is
expected.
This shows that the recovery performance of group Lasso degrades
when $g/k$ increases, which confirms our theory.

\subsubsection{With incorrect group structure}
In this experiment, we assume that the known group structure is
not exactly the same as the sparsity of the signal (that is, $k >
\|\bb\|_0$). We randomly generate strongly group sparse
coefficients with values $\pm 1$, where $p=512$, $\|\bb\|_0=64$
and $g=16$. In the first experiment, we let $k = 4 \|\bb\|_0$, and
use $m=32$ groups with even group size of $k_0=16$.

Figure~\ref{fig:Exp_even_example_incorrect} shows one instance of
the generated sparse signal and the recovered results by Lasso and group Lasso
respectively when $n=3\|\bb\|_0=192$.
In this case, the standard Lasso obtains better recovery results than
the group Lasso.
Figure~\ref{fig:Exp_even_stat_correct} shows the effect of
sample size $n$, where we report the averaged recover error
over 100 random runs for each sample size.
The group Lasso recovery performance is clearly inferior to
that of the Lasso.
This shows that group Lasso fails when $k/\|\bb\|_0$ is relatively
large, which is consistent with our theory.

To study the effect of $k/\|\bb\|_0$ on the group Lasso
performance, we keep $\|\bb\|_0$ fixed, and simply vary the group
size as $k_0=1,2,4,8,16,32,64$ with $k/\|\bb\|_0=1,1,1,2,4,8,16$.
Figure~\ref{fig:Exp_even_stat_incorrect_gs} shows the performance
of the two algorithms with different group sizes $k_0$ in terms of
recovery error. It shows that the performance of group Lasso is
better when $k/\|\bb\|_0=1$. However, when $k/\|\bb\|_0
>1$, the performance of group Lasso deteriorates.

\begin{figure}[htbp]
\centering
\includegraphics[width = 0.7 \columnwidth]{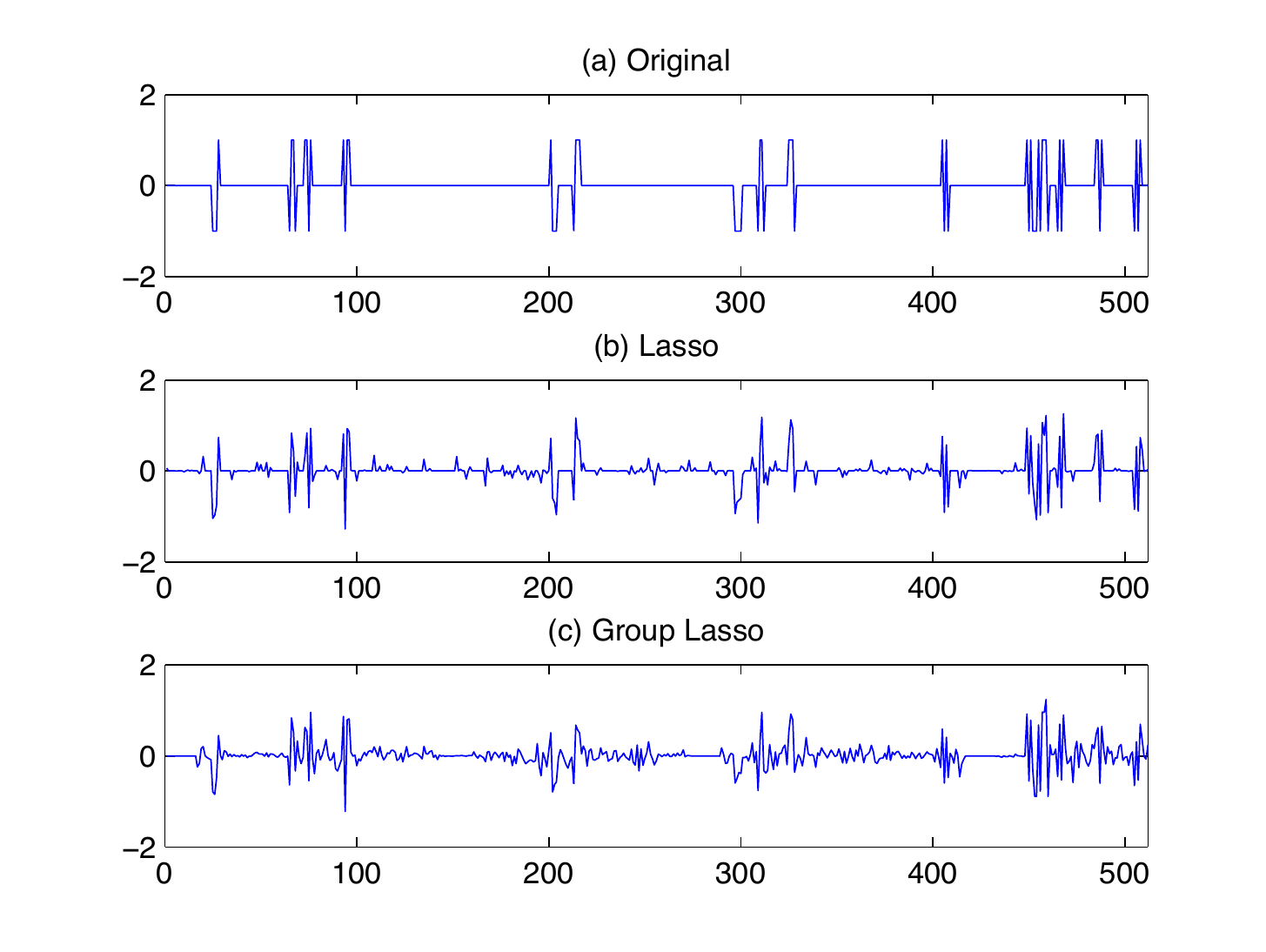}
\caption{Recovery results when the assumed group structure is
incorrect. (a) Original data; (b) results with Lasso (recovery
error is 0.3616); (c) results with Group Lasso (recovery error is
0.6688)} \label{fig:Exp_even_example_incorrect}
\end{figure}

\begin{figure}[htbp]
\centering
    \subfigure[]{\label{fig:Exp_even_stat_incorrect_m}
        \includegraphics[scale=0.46]{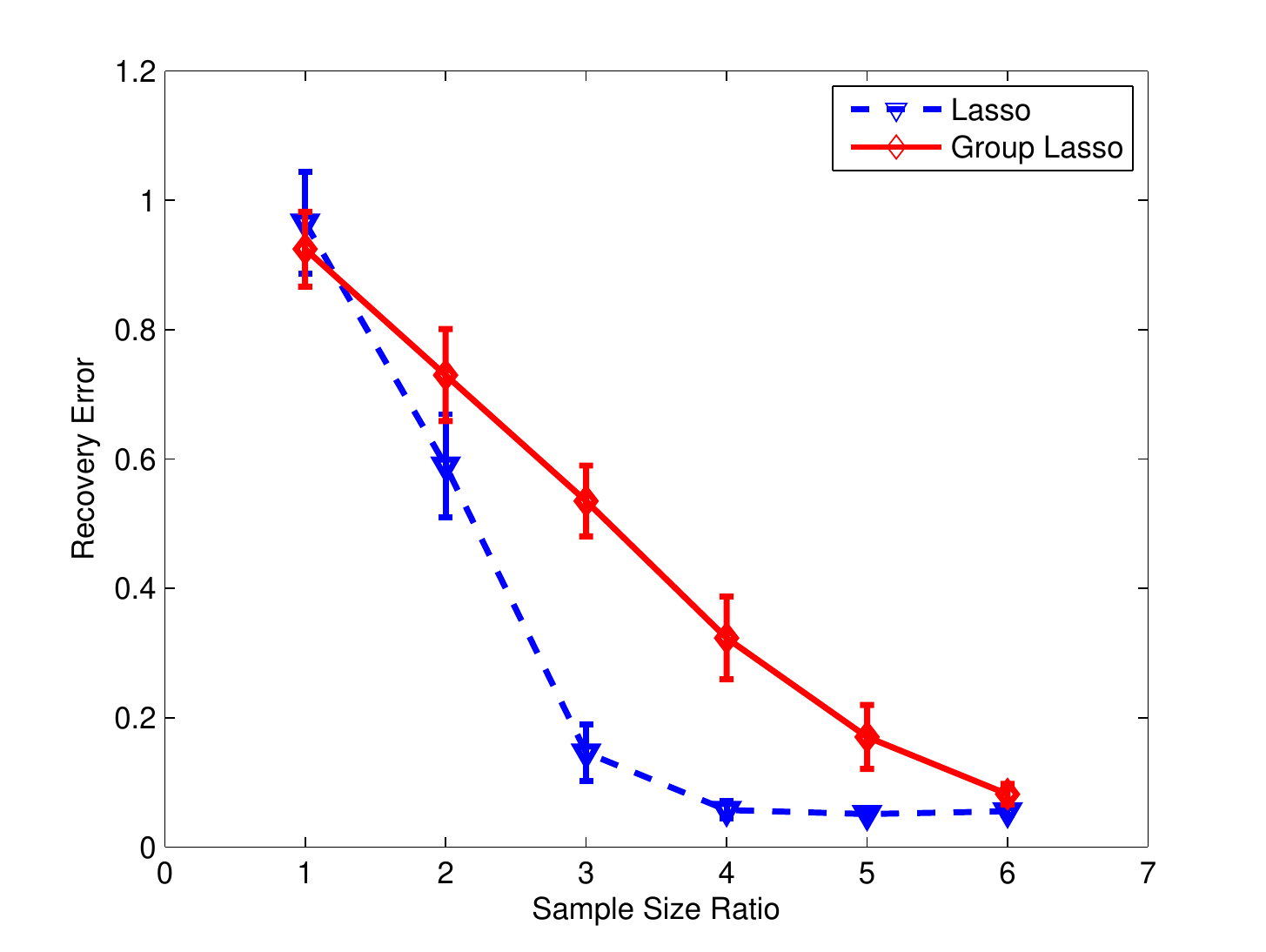}}
    \subfigure[]{\label{fig:Exp_even_stat_incorrect_gs}
        \includegraphics[scale=0.46]{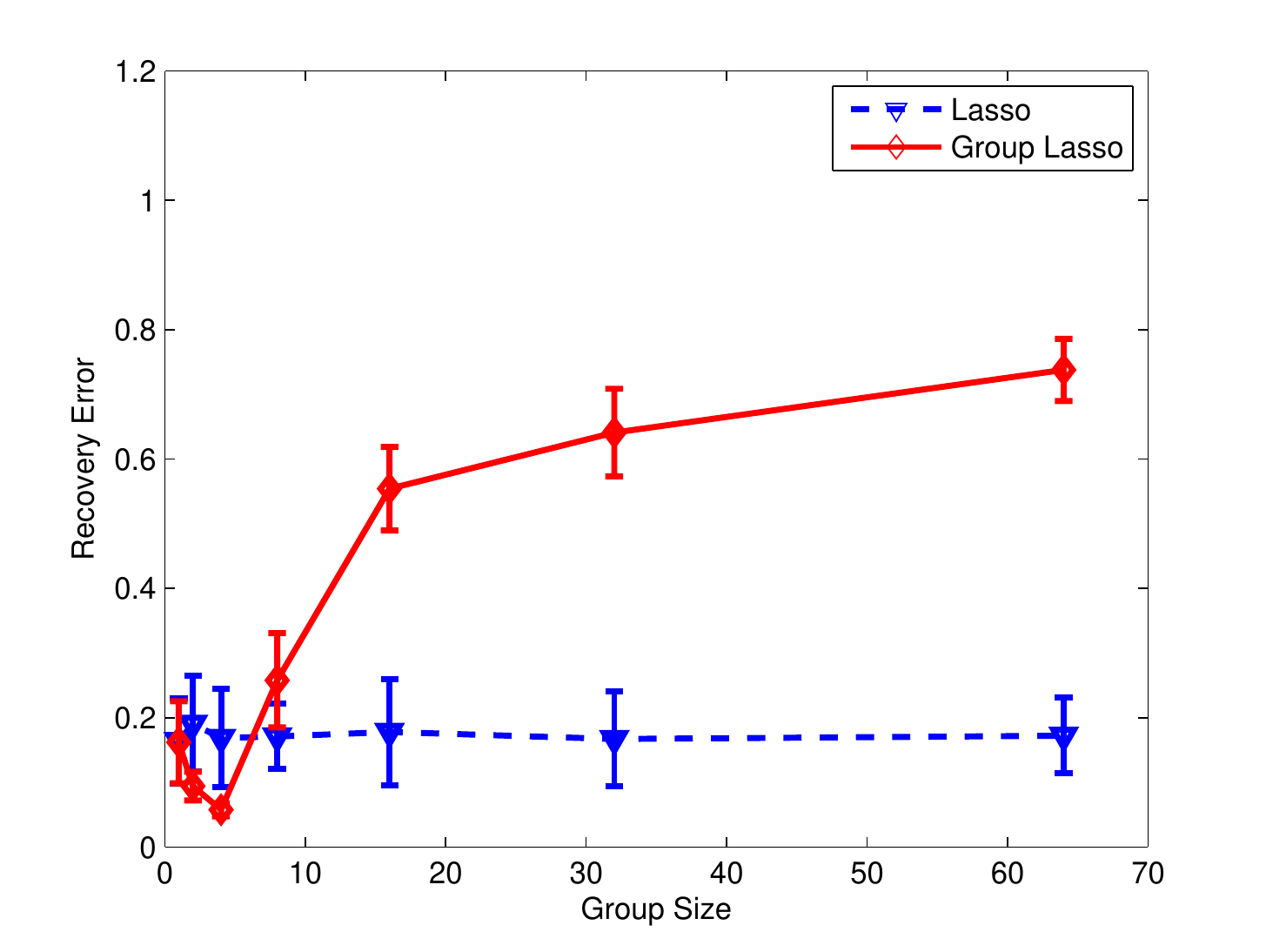}}
\caption{Recovery performance: (a) recovery error vs. sample size
ratio $n/k$; (b) recovery error vs. group size $k_0$}
\end{figure}

\subsection{Uneven group size}
\label{exp:uneven}
In this set of experiments, we randomly generate $(g,k)$ strongly
sparse coefficients with values $\pm 1$, where $p=512$, and $g=4$.
There are 64 uneven sized groups. The projection matrix
$X$ and noises are generated as in
the even group size case. Our task is to compare the recovery
performance of Lasso and Group Lasso for $(g,k)$ strongly
sparse signals with $\|\bb\|_0=k$.
To reduce the variance, we run each experiment 100 times and
report the average performance.

In the first experiment, the group sizes of $64$ groups are randomly generated
and the $g=4$ active groups are randomly extracted from these $64$ groups.
Figure~\ref{fig:Exp_uneven_stat_m_rand} shows the recovery
performance of Lasso and group Lasso with increasing sample size
(measurements) in terms of recovery error. Similar to the case of even group
size, the group Lasso obtains better recovery results than those
with Lasso. It shows that the group Lasso is superior when the
group sizes are randomly uneven.

\begin{figure}[htbp]
\centering
    \subfigure[]{\label{fig:Exp_uneven_stat_m_rand}
        \includegraphics[scale=0.46]{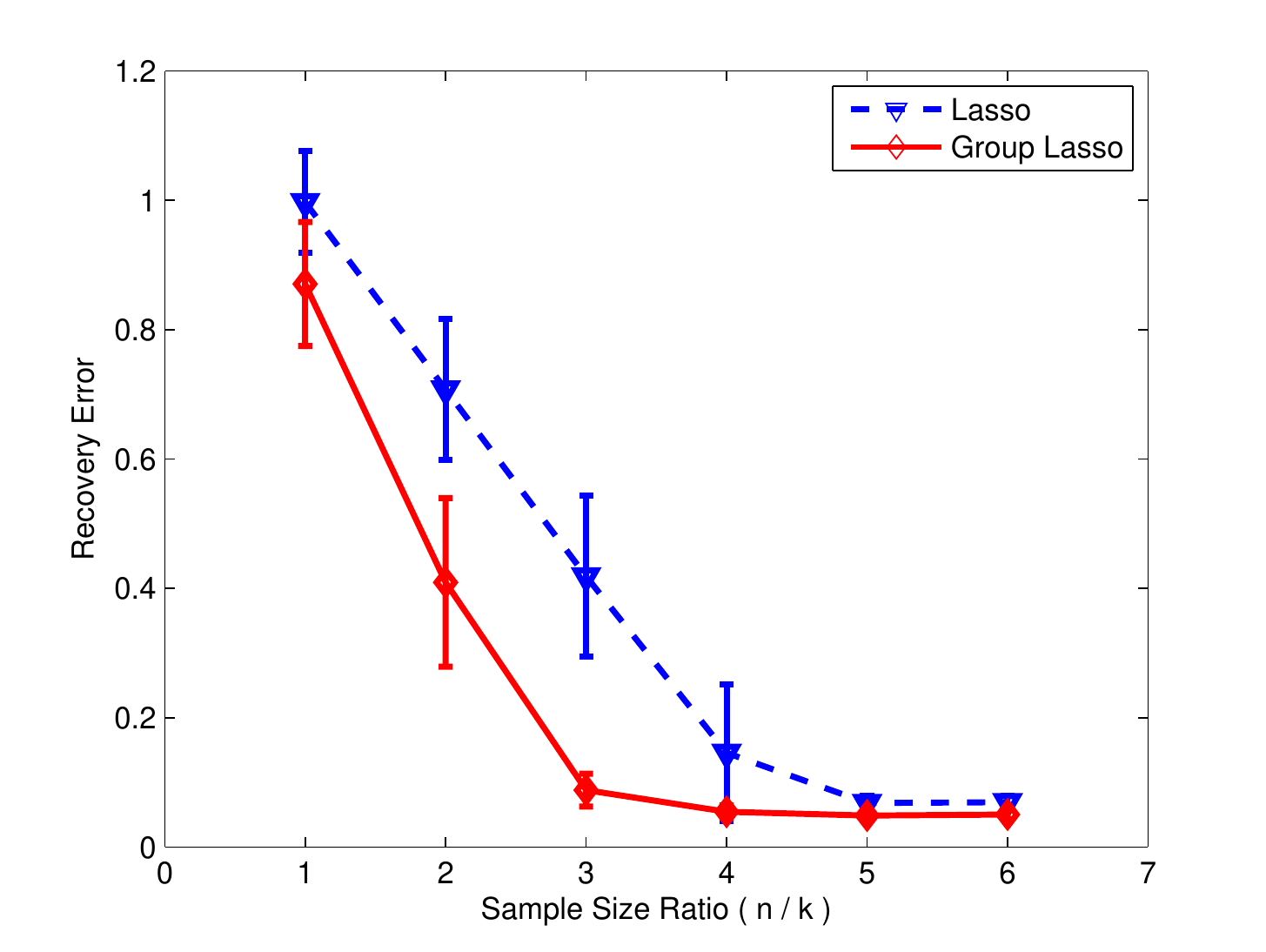}}
    \subfigure[]{\label{fig:Exp_uneven_stat_m_halfmixed}
        \includegraphics[scale=0.46]{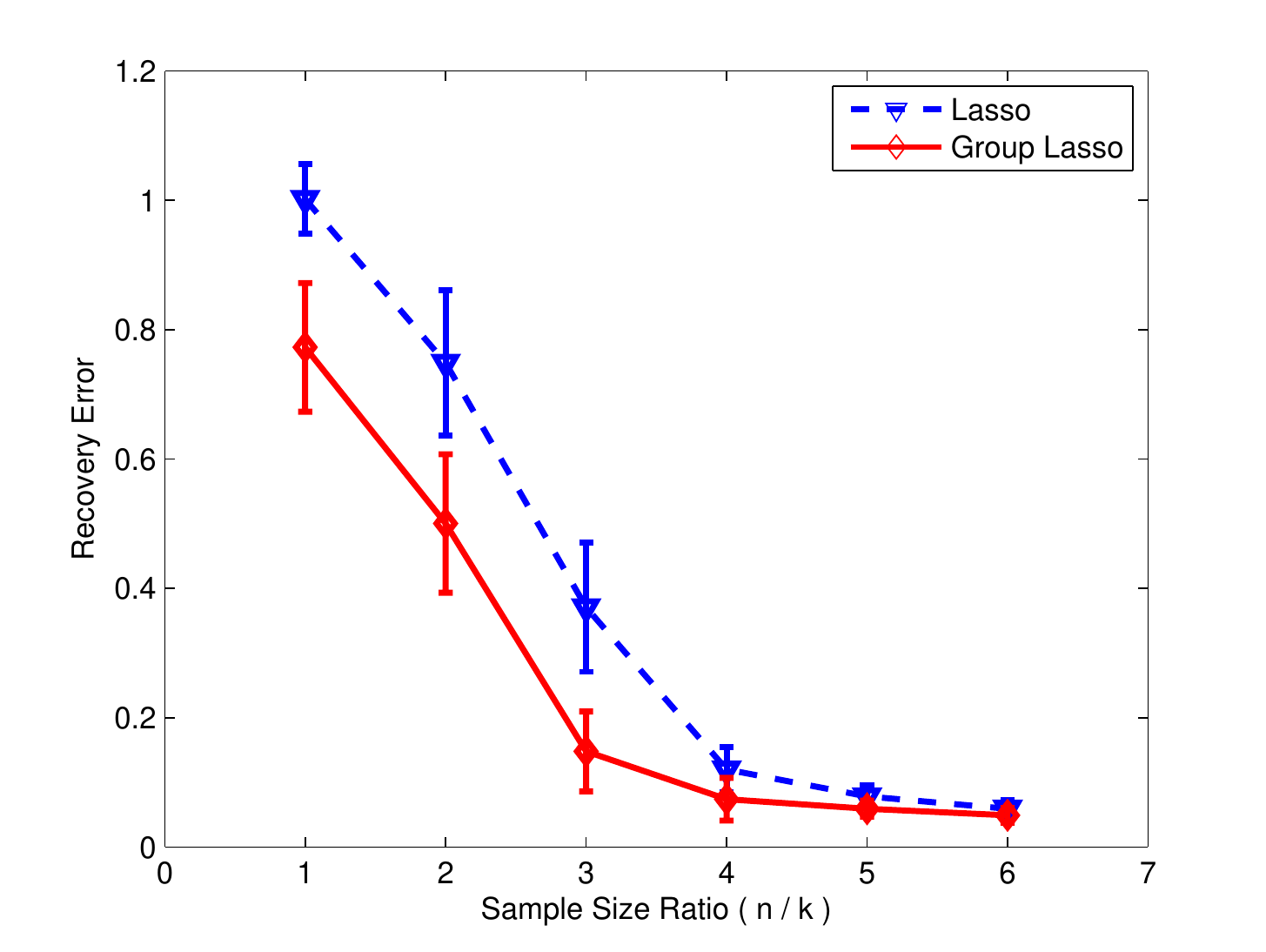}}
\caption{Recovery performance: (a) $g$ active groups have randomly
uneven group sizes; (b) half of $g$ active groups are single
element groups and another half of $g$ active groups have large
group size}
\end{figure}

As discussed after Theorem~\ref{thm:group}, because
group Lasso favors large sized groups,
if the signal is contained in small sized groups, then the
performance of group Lasso can be relatively poor.
In order to confirm this claim of Theorem~\ref{thm:group},
we consider the special case where $32$ groups have
large group sizes and each of the remaining $32$ groups has only
one element. First, we consider the case where half of $g=4$
active groups are extracted from the single element groups and
the other half of $g=4$ active groups are extracted from the groups
with large size. Figure~\ref{fig:Exp_uneven_stat_m_halfmixed}
shows the signal recovery performance of Lasso and group Lasso.
It is clear that the group Lasso performs better, but the
results are not as good as those of
Figure~\ref{fig:Exp_uneven_stat_m_rand}.

Moreover, Figure \ref{fig:Exp_uneven_stat_m_large} shows the recovery
performance of Lasso and group Lasso when
all of the $g=4$ active groups are extracted from large sized groups.
We observe that the relative performance of group Lasso improves.
Finally, Figure \ref{fig:Exp_uneven_stat_m_single} shows the recovery
performance of Lasso and group Lasso when
all of the $g=4$ active groups are extracted from single element groups.
It is obvious that the group Lasso is inferior to Lasso in this case.
This confirms the prediction of Theorem~\ref{thm:group} that suggests that
group Lasso favors large sized groups.

\begin{figure}[htbp]
\centering
    \subfigure[]{\label{fig:Exp_uneven_stat_m_large}
        \includegraphics[scale=0.46]{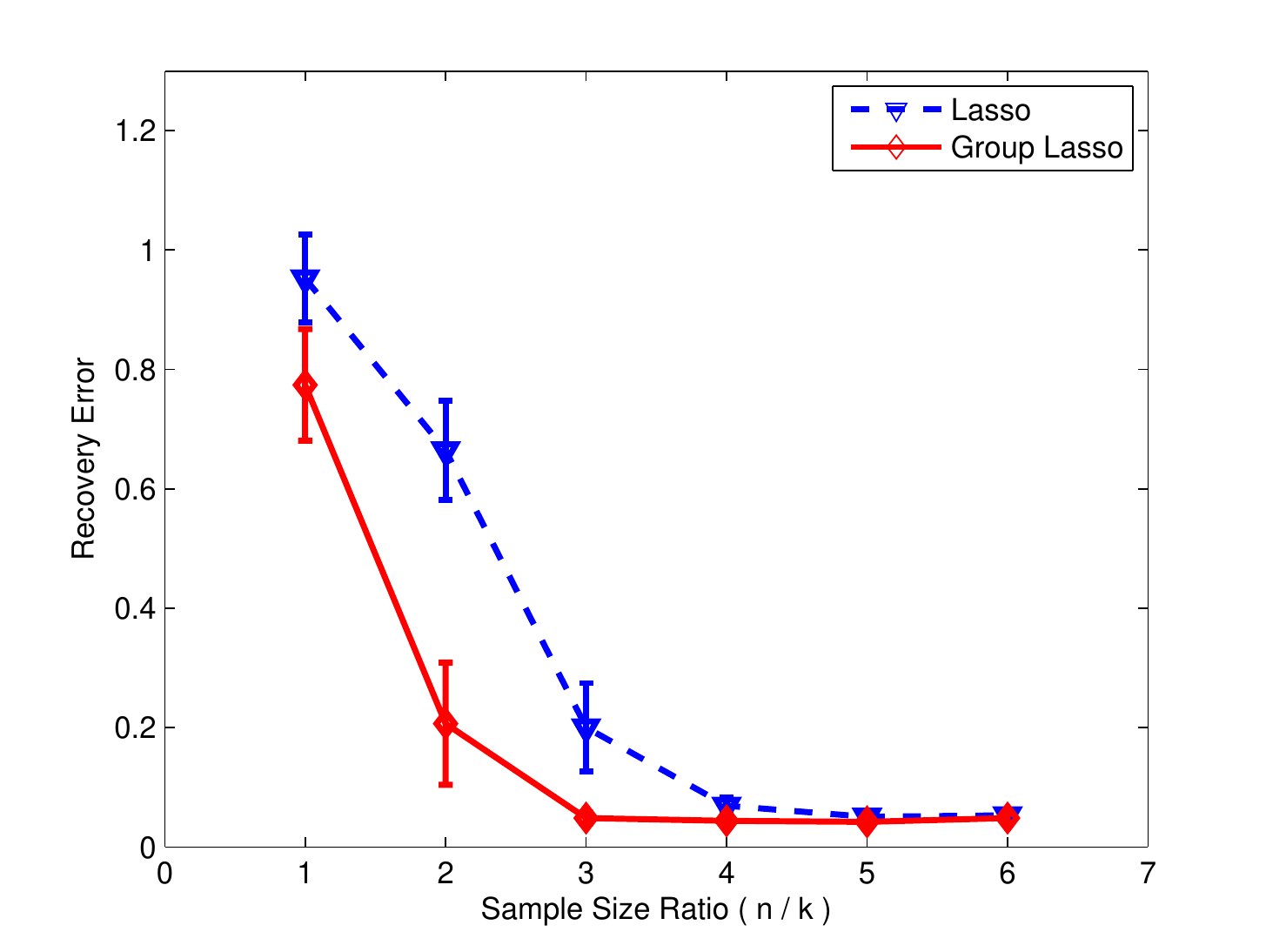}}
    \subfigure[]{\label{fig:Exp_uneven_stat_m_single}
        \includegraphics[scale=0.46]{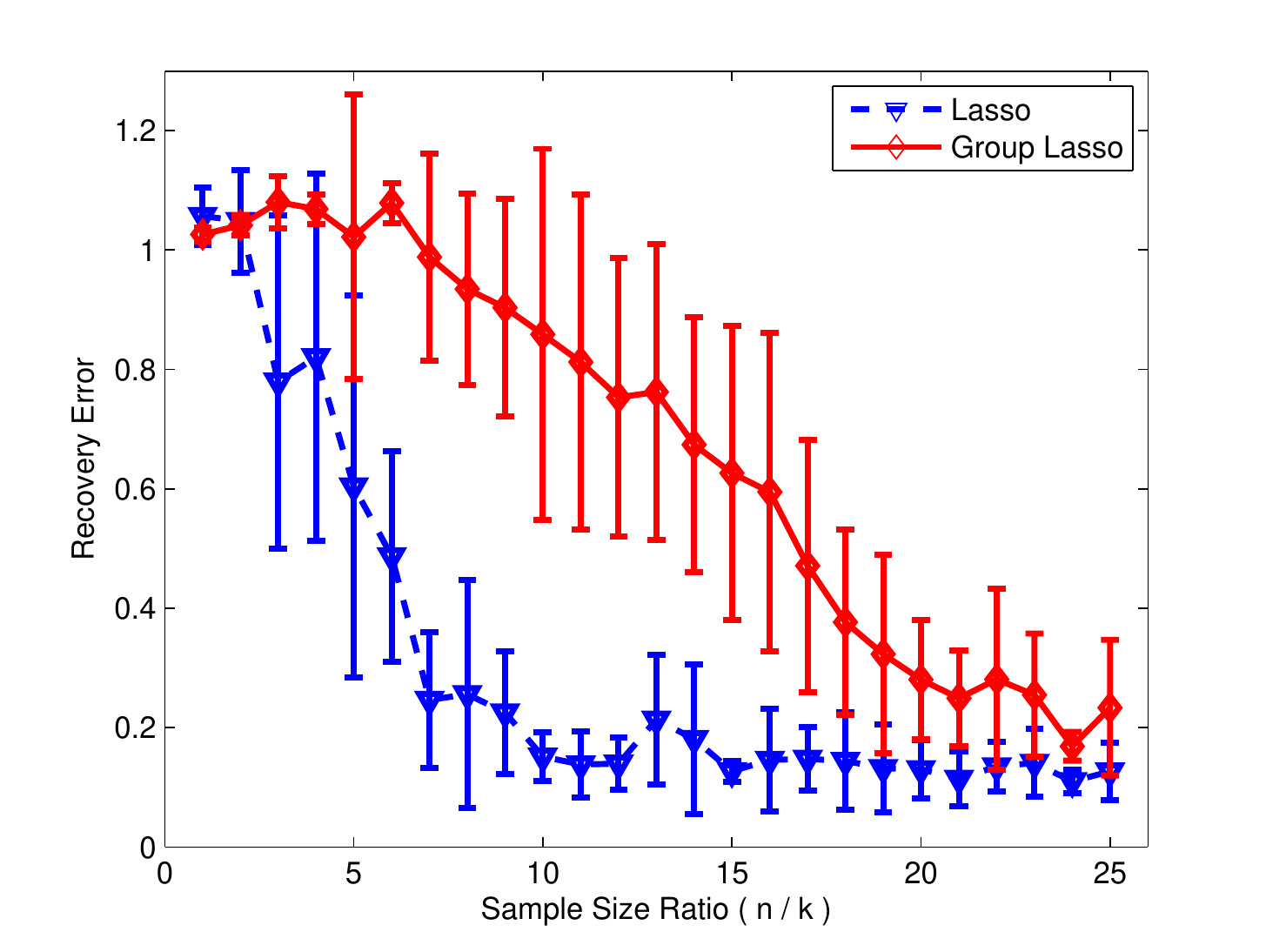}}
\caption{Recovery performance: (a) all $g$ active groups have large
group size; (b) all $g$ active groups are single
element groups}
\end{figure}

\section{Conclusion}

In this paper we introduced a concept called strong group sparsity that characterizes the signal recovery performance of group Lasso. In particular, we showed that group Lasso is superior to standard Lasso when the underlying signal is strongly group-sparse:
\begin{itemize}
\item Group Lasso is more robust to noise due to the stability associated with group structure.
\item Group Lasso requires a smaller sample size to satisfy the sparse eigenvalue condition required in the modern sparsity analysis.
\end{itemize}
However, group Lasso can be inferior if the signal is only weakly group-sparse, or covered by groups with small sizes. Moreover, group Lasso does not perform well with overlapping groups (which is not analyzed in this paper). Better learning algorithms are needed to overcome these limitations.

\bibliographystyle{plain}
\bibliography{group}

\appendix

\section{Proof of Proposition~\ref{prop:gaussian}}

Without loss of generality, we may assume $\sigma_i >0$ for all $i$
(otherwise, we can still let $\sigma_i>0$ and then just take the limit $\sigma_i \to 0$ for some $i$).

For notation
simplicity, we remove the subscript $j$ from the group index,
and consider group $G$ with $k$ variables.

Let $\Sigma$ be the diagonal matrix with $\sigma_i$ as its diagonal elements.
We can find an $n \times k$ matrix $Z=X_G (X_G^\top \Sigma X_G)^{-0.5} $, such that
$Z^\top \Sigma Z = I_{k \times k}$.
Let $\xi=Z^\top (\epsilon-\rE \epsilon) \in \R^k$. Since $\forall v \in \R^n$,
\[
\|(X_G^\top X_G)^{-0.5} X_G^\top v\|_2 = \|(Z^\top Z)^{-0.5} Z^\top v\|_2 ,
\]
we have
\begin{align*}
\frac{\|(X_{G}^\top X_{G})^{-0.5} X_{G}^\top (\epsilon- \rE \epsilon) \|_2^2}
{\xi^\top \xi}
\leq & \sup_{v \in \R^n} \frac{v^\top Z (Z^\top Z)^{-1} Z^\top v}{v^\top Z Z^\top v} \\
= & \sup_{u \in \R^k} \frac{u^\top (Z^\top Z)^{-1} u}{u^\top u}
=  \sup_{u \in \R^k} \frac{u^\top Z^\top \Sigma Z u}{u^\top (Z^\top Z) u} \\
\leq & \sup_{v \in \R^n} \frac{v^\top \Sigma v}{v^\top v} \leq \sigma^2 .
\end{align*}
Therefore, we only need to show that with probability at least $1-\eta$ for all $\eta \in (0,1)$:
\begin{equation}
\|\xi\|_2 \leq a \sqrt{k} + b \sqrt{-\ln \eta}
\label{eq:chi-dev}
\end{equation}
with $a=1$ and $b=\sqrt{2}$.

To prove this inequality,
we note that the condition $Z^\top \Sigma Z = I_{k \times k}$
means that the covariance matrix of $\xi$ is
$I_{k \times,k}$. Therefore the components of $\xi$ are $k$ iid Gaussians
$N(0,1)$, and the distribution of $\|\xi\|_2^2$ is $\chi^2$.
Many methods have been suggested to approximate the tail probability of
$\chi^2$ distribution. For example, a well-known approximation of
$\|\xi\|_2$ is the normal $N(\sqrt{k-0.5},0.5)$, which would imply
$a=b=1$ in (\ref{eq:chi-dev}).
In the following, we derive a slightly weaker tail probability bound using
direct integration of tail probability
for $\delta \geq \sqrt{k}$:
\begin{align*}
P(\|\xi\|_2^2 \geq \delta^2) =& \frac{1}{\Gamma(k/2)2^{k/2}} \int_{x \geq \delta^2} x^{k/2-1} e^{-x/2} d x \\
 =& \frac{2}{\Gamma(k/2)2^{k/2}} \int_{x \geq \delta} x^{k-1} e^{-x^2/2} d x \\
 =& \frac{2 \delta^{k-1}}{\Gamma(k/2)2^{k/2}} \int_{x \geq 0} e^{-(x+\delta)^2/2 +(k-1)\ln (1+x/\delta)} d x \\
 \leq & \frac{2 \delta^{k-1} e^{-\delta^2/2}}{\Gamma(k/2)2^{k/2}} \int_{x \geq 0} e^{-x^2/2 +x (-\delta+(k-1)/\delta)} d x \\
 \leq & \frac{\sqrt{2\pi} \delta^{k-1} e^{-\delta^2/2}}{\Gamma(k/2)2^{k/2}}
\leq \sqrt{0.5} (\delta/\sqrt{k})^{k-1} e^{-0.5\delta^2 + 0.5 k} \\
\leq& \sqrt{0.5} e^{-\delta^2/2 + 0.5k + (k-1) (\delta/\sqrt{k}-1)} \leq \sqrt{0.5} e^{- (\delta-\sqrt{k})^2/2} .
\end{align*}
This implies that (\ref{eq:chi-dev}) holds with $a=1$ and $b=\sqrt{2}$.

Note that in the above derivation,
we have used the following Sterling lower bound
for the Gamma function
\[
\Gamma(0.5 k) \geq \sqrt{2 \pi} (0.5k)^{0.5k-0.5} e^{-0.5 k} .
\]

\section{Proof of Proposition~\ref{prop:approx}}

We consider the following group-greedy procedure starting with
$\bb^{(0)}=\bb$, and form $(k^{(\ell)},g^{(\ell)})$ strongly group sparse
$\bb^{(\ell)}$  as follows for $\ell=1,2,\ldots$
\begin{itemize}
\item let $r^{(\ell-1)}= X \bb^{(\ell-1)} - \rE \by$,
\item let $j^{(\ell)}=\arg\max_j [\|(X_{G_j}^\top X_{G_j})^{-0.5} X_{G_j}^\top r^{(\ell-1)}\|_2/\sqrt{k_j a_0^2 + b_0^2}]$,
\item let $\bb^{(\ell)}=\bb^{(\ell-1)}$; and then reset its coefficients
in group $G_j$ as
$\bb^{(\ell)}_{G_j}= \bb^{(\ell)}_{G_j} - (X_{G_j}^\top X_{G_j})^{-1} X_{G_j}^\top r^{(\ell-1)}$, where $j=j^{(\ell)}$.
\end{itemize}
It is not difficult to check that
\[
\| r^{(\ell-1)}\|_2^2 - \| r^{(\ell)}\|_2^2
= \|(X_{G_j}^\top X_{G_j})^{-0.5} X_{G_j}^\top r^{(\ell-1)}\|_2^2 ,
\]
$k^{(\ell)}- k^{(\ell-1)} \leq k_j$, 
$g^{(\ell)}- g^{(\ell-1)} \leq 1$, 
with $j=j^{(\ell)}$.
Therefore if for all $0 \leq \ell \leq t$, we have
\[
\arg\max_j \left[\|(X_{G_j}^\top X_{G_j})^{-0.5} X_{G_j}^\top r^{(\ell)}\|_2/\sqrt{k_j a_0^2 + b_0^2}\right] \geq \sqrt{n} \Delta/\sqrt{k a_0^2 + b_0^2} ,
\]
then by summing over $\ell=1,\ldots, t, t+1$, we obtain
\begin{align*}
n \Delta^2=&\| r^{(0)}\|_2^2 
\geq \sum_{\ell=1}^{t+1} [\|r^{(\ell-1)}\|_2^2- \| r^{(\ell)}\|_2^2] \\
\geq& n \sum_{\ell=1}^{t+1} [(k^{(\ell)}-k^{(\ell-1)}) a_0^2 + (g^{(\ell)}-g^{(\ell-1)}) b_0^2]
\Delta^2/(k a_0^2 + b_0^2) \\
\geq& n [(k^{(t+1)}-k) a_0^2 + (g^{(t+1)}-g) b_0^2]
\Delta^2/(k a_0^2 + b_0^2) .
\end{align*}
This implies that
\[
k^{(t+1)} a_0^2 + g^{(t+1)} b_0^2 \leq 2 (k a_0^2 + g b_0^2) .
\]
Therefore if we let $t$ be the first time 
$k^{(t+1)} a_0^2 + g^{(t+1)} b_0^2 > 2 (k a_0^2 + g b_0^2)$, then there exists
$\ell \leq t$, such that $\bb'=\beta^{(\ell)}$ satisfies the requirement.

\section{Proof of Proposition~\ref{prop:rip}}

The following lemma is taken from \cite{Pisier89}. Since the proof is simple,
it is included for completeness.
\begin{lemma}
  Consider the unit sphere $S^{k-1}=\{x: \|x\|_2 = 1\}$ in
  $\mathbb{R}^{k}$ ($k \geq 1$).
  Given any $\varepsilon > 0$,
  there exists an $\varepsilon$-cover  $Q \subset S^{k-1}$
  such that
  $\min_{q \in Q}\|x-q\|_{2}\leq \varepsilon$
  for all  $\|x\|_2 = 1$, with
  $|Q| \leq (1+2/\varepsilon)^{k}$.
  \label{lem:B1}
\end{lemma}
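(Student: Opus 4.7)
The plan is to use the standard volume-comparison argument for covering numbers on a sphere. I would construct $Q$ as a maximal $\varepsilon$-separated subset of $S^{k-1}$, i.e., a set of points in $S^{k-1}$ whose pairwise Euclidean distances are all strictly greater than $\varepsilon$, chosen so that no further point of $S^{k-1}$ can be added without violating this separation condition. By maximality, every $x \in S^{k-1}$ must lie within distance $\varepsilon$ of some $q \in Q$ (otherwise $x$ could be added to $Q$), so $Q$ is automatically an $\varepsilon$-cover.

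Next I would bound $|Q|$ by a packing/volume argument. Around each $q \in Q$, the open Euclidean ball $B(q,\varepsilon/2)$ of radius $\varepsilon/2$ is disjoint from the corresponding balls around the other elements of $Q$, because any two centers are at distance greater than $\varepsilon$. On the other hand, since $\|q\|_2=1$, each ball $B(q,\varepsilon/2)$ is contained in the ball $B(0,1+\varepsilon/2)$ centered at the origin. Let $V_k$ denote the volume of the unit ball in $\mathbb{R}^k$; comparing the total volume of the disjoint small balls against the volume of the enclosing ball gives
\[
|Q|\,(\varepsilon/2)^k V_k \;\leq\; (1+\varepsilon/2)^k V_k,
\]
from which $|Q| \leq \bigl((1+\varepsilon/2)/(\varepsilon/2)\bigr)^{k} = (1+2/\varepsilon)^{k}$, as claimed.

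There is no real obstacle here: the argument is entirely elementary and the only nontrivial ingredient is the observation that a maximal $\varepsilon$-separated set is automatically an $\varepsilon$-net. The only mild subtlety is to be careful that ``separated'' uses strict inequality while ``cover'' uses non-strict inequality, so that the maximality step legitimately produces an $\varepsilon$-cover; the volume estimate itself is insensitive to boundary issues since Lebesgue measure ignores them.
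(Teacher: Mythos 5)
Your proof is correct and follows essentially the same route as the paper's: a maximal $\varepsilon$-separated subset of $S^{k-1}$ is automatically an $\varepsilon$-cover, and the disjoint balls of radius $\varepsilon/2$ packed inside $(1+\varepsilon/2)B^{k}$ give the bound $|Q| \leq (1+2/\varepsilon)^{k}$ by volume comparison. Nothing further is needed.
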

\begin{proof}
Let $B^{k}=\{x: \|x\|_2 \leq 1\}$ be the unit ball in $\mathbb{R}^{k}$.
Let $Q = \{q_{i}\}_{i=1,\ldots, |Q|} \subset S^{k-1}$ be a maximal subset such that
$\|q_{i}-q_{j}\|_{2}>\varepsilon$ for all $i \neq j$.
By maximality, $Q$ is an $\varepsilon$-cover of $S^{k-1}$.
Since the balls
$q_{i}+(\varepsilon/2)B^{k}$ are disjoint and belong to
$(1+\varepsilon/2)B^{k}$, we have
\[
\sum_{i\leq |Q|} vol(q_{i}+(\varepsilon/2)B^{k})\leq
vol((1+\varepsilon/2)B^{k}) .
\]
Therefore,
\[
|Q|(\varepsilon/2)^{k}vol(B^{k}) \leq
(1+\varepsilon/2)^{k}vol(B^{k}),
\]
which implies that $|Q| \leq (1+2/\varepsilon)^{k}$.
\end{proof}

The following concentration result for $\chi^2$ distribution is similar
to Proposition~\ref{prop:gaussian}. This is where the Gaussian assumption is used in the proof. A similar result holds for sub-Gaussian random variables.
\begin{lemma}
 Let $\xi \in \R^n$ be a vector of $n$ iid standard Gaussian variables:
 $\xi_i \sim N(0,1)$. Then $\forall \epsilon \geq 0$:
 \[
 \pr \left[ |\|\xi\|_2 - \sqrt{n}| \geq \epsilon \right]
 \leq 3 e^{ - \epsilon^2/2} .
 \]
 \label{lem:B2}
\end{lemma}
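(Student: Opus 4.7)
The plan is to split the event $\{|\|\xi\|_2 - \sqrt{n}| \geq \epsilon\}$ into the upper tail $\{\|\xi\|_2 \geq \sqrt{n}+\epsilon\}$ and the lower tail $\{\|\xi\|_2 \leq \sqrt{n}-\epsilon\}$, bound each separately, and sum. The lower tail is trivially zero when $\epsilon > \sqrt{n}$, so I only need to control it for $\epsilon \in [0,\sqrt{n}]$.

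For the upper tail, I can directly reuse the calculation already carried out in the proof of Proposition~\ref{prop:gaussian}: taking $k=n$ and $\delta = \sqrt{n}+\epsilon$ in the displayed chain of inequalities there yields
\[
\Pr[\|\xi\|_2 \geq \sqrt{n}+\epsilon] \leq \sqrt{0.5}\, e^{-\epsilon^2/2}.
\]
So the only new work is the lower tail, and this is the main (mild) obstacle, since the direct integration trick of Proposition~\ref{prop:gaussian} exploits that the $\chi^2$ density decays fast at infinity and is not well-suited to tail bounds near zero.

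For the lower tail I would use a Chernoff-type argument based on the moment generating function of $\chi^2$. Since $\xi_i \sim N(0,1)$ are independent, $\mathbb{E}[e^{-t\|\xi\|_2^2}]=(1+2t)^{-n/2}$ for $t>0$. Setting $\lambda=(\sqrt{n}-\epsilon)^2$, Markov's inequality gives
\[
\Pr[\|\xi\|_2^2 \leq \lambda] \leq e^{t\lambda}(1+2t)^{-n/2}.
\]
Optimizing with $1+2t=n/\lambda$ (valid since $\lambda<n$) produces the bound $\exp[(n-\lambda)/2 + (n/2)\ln(\lambda/n)]$. Writing $x=\epsilon/\sqrt{n}\in[0,1]$ so that $\lambda/n=(1-x)^2$, this exponent equals $n\bigl(x - x^2/2 + \ln(1-x)\bigr)$. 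Using the elementary inequality $\ln(1-x) \leq -x - x^2/2$ collapses this to $-nx^2 = -\epsilon^2$, giving
\[
\Pr[\|\xi\|_2 \leq \sqrt{n}-\epsilon] \leq e^{-\epsilon^2} \leq e^{-\epsilon^2/2}.
\]

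Finally, adding the two tail bounds produces $(1+\sqrt{0.5})\, e^{-\epsilon^2/2}$, which is strictly less than $3e^{-\epsilon^2/2}$, establishing the claim. The constant $3$ in the statement is therefore quite loose; the only essential ingredients are the Gaussian tail computation already used in Proposition~\ref{prop:gaussian} for the upper tail and a one-line Chernoff bound with Taylor expansion of $\ln(1-x)$ for the lower tail.
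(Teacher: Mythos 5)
Your proof is correct, and it diverges from the paper's in how the lower tail is handled. The paper treats both tails by the same direct integration of the $\chi^2$ density: the upper tail is exactly Proposition~\ref{prop:gaussian}, and for the lower tail it re-runs that computation with $\delta=\sqrt{n}-\epsilon$, integrating over $x\leq 0$ instead of $x\geq 0$, which costs an extra factor of $3$ and yields the total constant $4\sqrt{0.5}=2\sqrt{2}\approx 2.83$. You keep the upper tail identical (correctly noting that the Proposition~\ref{prop:gaussian} chain applies since $\delta=\sqrt{n}+\epsilon\geq\sqrt{n}$), but replace the lower-tail integration with a Chernoff bound on the Laplace transform $\rE[e^{-t\|\xi\|_2^2}]=(1+2t)^{-n/2}$; the optimization and the inequality $\ln(1-x)\leq -x-x^2/2$ are both right, and the resulting bound $e^{-\epsilon^2}$ is actually stronger than needed, giving the total $(1+\sqrt{0.5})e^{-\epsilon^2/2}\approx 1.71\,e^{-\epsilon^2/2}$. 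What your route buys: it sidesteps having to re-justify the integration chain in the regime $\delta<\sqrt{n}$ (the final inequality in Proposition~\ref{prop:gaussian}'s derivation was established under $\delta\geq\sqrt{k}$, so the paper's lower-tail adaptation is the more delicate of the two), it gives a better constant, and the moment-generating-function argument extends immediately to sub-Gaussian or sub-exponential coordinates. What the paper's route buys is self-containment: one computation serves both tails without introducing the MGF machinery. One trivial point worth a half-sentence in a final write-up: for $\epsilon=\sqrt{n}$ the optimization $1+2t=n/\lambda$ degenerates, but there the lower-tail probability is $\pr[\|\xi\|_2= 0]=0$, so nothing is lost.
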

\begin{proof}
  Proposition~\ref{prop:gaussian} implies that
  \[
  \pr \left[ \|\xi\|_2 - \sqrt{n} \geq \epsilon \right]
  \leq  \sqrt{0.5} e^{- \epsilon^2/2} .
  \]
  Using identical derivation in the proof of  Proposition~\ref{prop:gaussian},
  and let $\delta=\sqrt{n}-\epsilon$ and $k=n$, we obtain:
  \begin{align*}
  \pr \left[ \|\xi\|_2 - \sqrt{n} \leq -\epsilon \right]
 \leq & \frac{2 \delta^{k-1} e^{-\delta^2/2}}{\Gamma(k/2)2^{k/2}} \int_{x \leq 0} e^{-x^2/2 +x (-\delta+(k-1)/\delta)} d x \\
 \leq & \frac{2 \delta^{k-1} e^{-\delta^2/2}}{\Gamma(k/2)2^{k/2}} \int_{x \leq 0} e^{-x^2/2 -x} d x \\
 \leq & 3 \times \frac{\sqrt{2\pi} \delta^{k-1} e^{-\delta^2/2}}{\Gamma(k/2)2^{k/2}} 
 \leq 3 \times \sqrt{0.5} e^{-\epsilon^2/2} .
\end{align*}
Combining the above two inequalities, we obtain the desired bound.
\end{proof}

The derivation of the following estimate employs a standard proof 
technique (for example, see \cite{RaScVa08}).
\begin{lemma}
  Suppose $X$ is generated according to Proposition~\ref{prop:rip}.
  For any fixed set $S \subset \{1,\ldots,p\}$ with $|S|=k$ and $0<\delta<1$,
  we have with probability exceeding
  $1-3(1+8/\delta)^{k}e^{- n \delta^2/8}$:
  \begin{equation}
  (1 -\delta)\|\beta\|_{2}\leq \frac{1}{\sqrt{n}} \|X_S \beta\|_{2}\leq(1+\delta)\|\beta\|_{2} \label{eq:Phi-norm-bound-all}
  \end{equation}
  for all $\beta \in \mathbb{R}^k$.
  \label{lem:B3}
\end{lemma}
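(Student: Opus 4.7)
The plan is standard: combine the $\varepsilon$-net construction from Lemma~\ref{lem:B1} with the $\chi^2$-type tail bound from Lemma~\ref{lem:B2}, and then transfer the resulting pointwise estimates on the net to all unit vectors via a covering argument.

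First, I would fix $\varepsilon = \delta/4$ and invoke Lemma~\ref{lem:B1} to obtain an $\varepsilon$-cover $Q\subset S^{k-1}$ of the unit sphere in $\R^k$ with $|Q|\leq(1+2/\varepsilon)^{k}=(1+8/\delta)^{k}$. For each fixed $q\in Q$, since the columns of $X_S$ are iid $N(0,I_n)$ and $\|q\|_2=1$, the vector $X_S q\in\R^n$ is itself standard Gaussian. Lemma~\ref{lem:B2} applied with deviation parameter $\sqrt{n}\,\delta/2$ gives
\[
\pr\!\left[\,\bigl|\|X_S q\|_{2}-\sqrt{n}\bigr|\geq \sqrt{n}\,\delta/2\,\right]\leq 3\,e^{-n\delta^{2}/8}.
\]
A union bound over $Q$ then shows that, outside an event of probability at most $3(1+8/\delta)^{k}e^{-n\delta^{2}/8}$, the following holds simultaneously for every $q\in Q$:
\[
(1-\delta/2)\sqrt{n}\;\leq\;\|X_S q\|_{2}\;\leq\;(1+\delta/2)\sqrt{n}.
\]

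Working on this event, I would transfer the estimate to every $\beta\in S^{k-1}$ by the usual covering argument. Let $A=\sup_{\|\beta\|_2=1}\|X_S\beta\|_{2}/\sqrt{n}$. Given a unit $\beta$, pick $q_0\in Q$ with $\|\beta-q_0\|_2\leq \varepsilon$; then $\|X_S\beta\|_2/\sqrt{n}\leq(1+\delta/2)+A\varepsilon$, and taking the supremum yields $A\leq (1+\delta/2)/(1-\delta/4)$. A one-line algebra check shows that for $\delta\in(0,1)$ this quantity is bounded by $1+\delta$. For the lower bound, the same decomposition gives $\|X_S\beta\|_2/\sqrt{n}\geq(1-\delta/2)-A\varepsilon\geq(1-\delta/2)-(1+\delta)(\delta/4)$, which is at least $1-\delta$ by the same elementary check. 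Homogenizing in $\|\beta\|_2$ produces (\ref{eq:Phi-norm-bound-all}).

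The proof has no real obstacle: the only decisions are the choices $\varepsilon=\delta/4$ (forced by the desired covering factor $(1+8/\delta)^{k}$) and deviation $\sqrt{n}\,\delta/2$ (forced by the desired concentration factor $e^{-n\delta^{2}/8}$), after which verifying that the covering slack is compatible with the target bounds $1\pm\delta$ is a routine calculation. All randomness is handled by Lemma~\ref{lem:B2}; all geometry is handled by Lemma~\ref{lem:B1}.
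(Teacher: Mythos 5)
Your proposal is correct and follows essentially the same route as the paper's proof: the same $\varepsilon$-net of size $(1+8/\delta)^k$ from Lemma~\ref{lem:B1} with $\varepsilon=\delta/4$, the same pointwise deviation $\sqrt{n}\,\delta/2$ from Lemma~\ref{lem:B2}, a union bound, and the same self-bounding covering argument (your supremum $A$ plays exactly the role of the paper's smallest admissible $\rho$, with the identical algebraic check that $(1+\delta/2)/(1-\delta/4)\leq 1+\delta$ for $\delta\in(0,1)$). No gaps.
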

\begin{proof}
  It is enough to prove the conclusion
  in the case of $\|\beta\|_{2}=1$.
  According to Lemma \ref{lem:B1}, given $\epsilon_1 >0$,
  there exists a finite set $Q=\{q_i\}$ with
  $|Q|\leq (1+2/\epsilon_{1})^{k}$
  such that $\|q_i\|_2=1$ for all $i$, and
  $\min_{i}\|\beta-q_i\|_{2}\leq \epsilon_1$
  for all $\|\beta\|_2=1$.

  For each $i$, Since elements of $\xi=X_S q_i$ are iid Gaussians
  $N(0,1)$, Lemma~\ref{lem:B2} implies that $\forall \epsilon_2>0$:
  \[
  \pr \left[| \|X_S q_i\|_{2}- \sqrt{n}\|q_i \|_{2}| \geq \sqrt{n} \epsilon_2 \right]
  \leq 3 e^{- n \epsilon_2^2/2} .
  \]
  Taking union bound for all $q_i \in Q$, we obtain
  with probability exceeding
  $1-3(1+2/\epsilon_{1})^{k}e^{ - n\epsilon_2^2/2}$:
  for all $q_i \in Q$,
  \[
  (1 - \epsilon_{2}) \leq \frac{1}{\sqrt{n}}\|X_S  q_i\|_{2}\leq(1+\epsilon_{2}) .
  \]
  Now, we define $\rho$ as the smallest nonnegative number such that
  \begin{equation}
   \frac{1}{\sqrt{n}} \|X_S \beta \|_{2}\leq (1+\rho) \label{eq:Phi-norm-bound}
  \end{equation}
  for all $\beta \in \mathbb{R}^k$ with $\|\beta\|_2=1$.
  Since for all $\|\beta\|_2=1$, we can find $q_i \in Q$ such that
  $\|\beta-q_i\|_2 \leq \epsilon_1$,  we have
  \[
  \|X_S \beta\|_{2}\leq \|X_S q_i\|_{2}+\|X_S (\beta-q_i)\|_{2} \leq
  \sqrt{n} (1 + \epsilon_2 + (1+\rho) \epsilon_1) ,
  \]
  where we used (\ref{eq:Phi-norm-bound}) in the derivation.
  Since $\rho$ is the smallest non-negative constant for which
  (\ref{eq:Phi-norm-bound}) holds, we have
  \[
  \sqrt{n} (1+\rho) \leq  \sqrt{n} (1 + \epsilon_2 + (1+\rho) \epsilon_1) ,
  \]
  which implies that
  \[
  \rho \leq (\epsilon_1+\epsilon_2)/(1-\epsilon_1) .
  \]
  Now we choose $\epsilon_{1}=\delta/4$ and $\epsilon_2=\delta/2$.
  Since $0<\delta<1$, it is easy to see that $\rho \leq
  \delta$. This proves the upper bound.
  For the lower bound, we note that for all $\|\beta\|_2=1$ with
  $\|\beta-q_i\|_2 \leq \epsilon_1$,  we have
  \[
  \|X_S \beta\|_{2}\geq \|X_S q_i\|_{2} -\|X_S (\beta-q_i)\|_{2} \geq
  \sqrt{n} (1 - \epsilon_2 - (1+\rho) \epsilon_1) ,
  \]
  which leads to the desired result.
\end{proof}

\subsection*{Proof of Proposition~\ref{prop:rip}}

For each subset $S \subset \{1,\ldots,m\}$ of groups with $|S| \leq g$ and
$|G_S|\leq k$, we know from \ref{lem:B3} that
for all $\beta$ such that $\Fr(\beta) \subset G_S$:
\[
(1 -\delta)\|\beta\|_{2}\leq\frac{1}{\sqrt{n}}\|X \beta\|_{2}\leq(1+\delta)\|\beta\|_{2}
\]
with probability exceeding $1-3(1+8/\delta)^{k}e^{ - n \delta^2/8}$.

Since the number of such groups $S$ can be no more than
$C_m^g \leq (em/g)^g$, by taking the union bound, we know that
the group RIP in Equation (\ref{eq:Group-RIP})
fails with probability less than
\[
3(em/g)^{g} (1+8/\delta)^k e^{-n \delta^2/8} \leq
e^{-t} .
\]

\section{Technical Lemmas}
The following lemmas are adapted from \cite{Zhang07-l1}
to handle group sparsity structure.
Similar techniques can be found in \cite{BiRiTs07}.
The first lemma is in  \cite{Zhang07-l1}. The proof is included for completeness.
\begin{lemma}
  Let $A=X^\top X/n$, and let
  $I$ and $J$ be non-overlapping indices in $\{1,\ldots,p\}$. We have
  \[
  \|A_{I,J}\|_2 \leq \sqrt{(\rho_+(I)-\rho_-(I \cup J)) (\rho_+(J)-\rho_-(I\cup J))} ,
  \]
  where the matrix 2-norm is defined as $\|A_{I,J}\|_2=\sup_{\|u\|_2=\|v\|_2=1} |u^\top A_{I,J} v|$.
  \label{lem:A2}
\end{lemma}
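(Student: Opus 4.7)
The plan is a short polarization-and-optimize argument. Fix any unit vectors $u \in \R^{|I|}$ and $v \in \R^{|J|}$, and let $a = u^\top A_{I,J} v$. By flipping the sign of $v$ if necessary, it suffices to bound $|a|$ under the assumption $a \geq 0$. The main idea is to form a single vector supported on $I \cup J$ whose quadratic form $\beta^\top A \beta$ simultaneously picks up the cross-term $-2\alpha a$ (for a scalar $\alpha > 0$ to be chosen), and then sandwich this quadratic form between the upper bounds on the diagonal blocks ($\rho_+(I)$ and $\rho_+(J)$) and the lower bound on the combined support ($\rho_-(I\cup J)$).

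Concretely, I would define $\beta \in \R^p$ by $\beta_I = u$, $\beta_J = -\alpha v$, and $\beta_k = 0$ for $k \notin I \cup J$, so that $\Fr(\beta) \subset I \cup J$ and $\|\beta\|_2^2 = 1 + \alpha^2$. Since $I$ and $J$ are disjoint, direct expansion gives
\[
\frac{1}{n}\|X\beta\|_2^2 \;=\; u^\top A_{I,I} u \;+\; \alpha^2\, v^\top A_{J,J} v \;-\; 2\alpha\, a.
\]
Applying the definition of $\rho_-$ to the left-hand side yields $\rho_-(I \cup J)(1+\alpha^2)$ as a lower bound, while $u^\top A_{I,I} u \leq \rho_+(I)$ and $v^\top A_{J,J} v \leq \rho_+(J)$ bound the two diagonal contributions from above. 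Rearranging gives
\[
2\alpha\, a \;\leq\; \bigl(\rho_+(I) - \rho_-(I\cup J)\bigr) \;+\; \alpha^2 \bigl(\rho_+(J) - \rho_-(I\cup J)\bigr).
\]
Note both parenthesized quantities are nonnegative because any vector supported on $I$ (resp.\ $J$) is also supported on $I \cup J$, so $\rho_-(I \cup J) \leq \rho_-(I) \leq \rho_+(I)$ and similarly for $J$.

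Finally I would optimize over $\alpha > 0$. Writing $c_1 = \rho_+(I) - \rho_-(I\cup J)$ and $c_2 = \rho_+(J) - \rho_-(I\cup J)$, the bound reads $a \leq c_1/(2\alpha) + \alpha c_2/2$, which the AM--GM inequality minimizes at $\alpha = \sqrt{c_1/c_2}$ (or trivially if $c_2 = 0$, after a limiting argument), giving $a \leq \sqrt{c_1 c_2}$. Taking the supremum over unit $u,v$ produces the stated inequality. There is no real obstacle here; the only subtle choice is the sign and scaling of the two blocks of $\beta$ so that the cross term subtracts rather than adds and the AM--GM optimization is tight.
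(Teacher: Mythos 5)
Your proof is correct and follows essentially the same route as the paper's: the paper writes the same quadratic form with a free scalar $t$ in place of your $-\alpha$, sandwiches it between $\rho_+(I),\rho_+(J)$ above and $\rho_-(I\cup J)$ below, and reads off the bound from nonnegativity of the quadratic in $t$ (i.e.\ its discriminant), which is exactly your AM--GM optimization over $\alpha$.
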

\begin{proof}
  Consider $v \in \R^p$ with $v_I \in \R^{|I|}$ and $v_J \in \R^{|J|}$:
  positive semi-definiteness implies that
  \begin{align*}
    & \rho_+(I) \|v_I\|_2^2 +  2 t v_I^\top A_{I,J} v_J
  + t^2 \rho_+(J) \|v_J\|_2^2 \\
  \geq& v_I^\top A_{I,I} v_I +  2 t v_I^\top A_{I,J} v_J + t^2 v_J^\top A_{J,J} v_J \\
  \geq &
  \rho_-(I \cup J) (\|v_I\|_2^2 + t^2 \|v_J\|_2^2)
\end{align*}
for all $t$. This implies that
  \[
  |v_I^\top A_{I,J} v_J| \leq
  \sqrt{(\rho_+(I)-\rho_-(I \cup J)) (\rho_+(J)-\rho_-(I\cup J))}
  \|v_I\|_2 \|v_J\|_2 ,
  \]
  which leads to the desired result.
\end{proof}

The next lemma uses the previous result to
control the contribution of the non-signal part
$G^c$ of an error vector $u$ 
to the product $u_G^\top A_{G,G^c} u_{G^c}$.
\begin{lemma}
  Given $u \in \R^p$ and $S \subset \{1,\ldots,m\}$.
  Consider $\ell \geq 1$ and define
  \[
  \lambda_-^2 = \min \left\{ \sum_{j \in S'} \lambda_j^2 : |G_{S'}| \geq \ell\right\} .
  \]
  Let $S_0 \subset \{1,\ldots,m\}-S$ contain
  indices $j$ of largest values of $\|u_{G_j}\|_2/\lambda_j$
  ($j \notin S$),
  and satisfies the condition $\ell \leq |G_{S_0}| < \ell+k_0$.
  Let $G=G_{S} \cup G_{S_0}$.
  Then
  \[
  \sqrt{\sum_{j \notin S \cup S_0} \|u_{G_j}\|_2^2}
  \leq  (2\lambda_-)^{-1} \sum_{j \notin S} \lambda_j \|u_{G_j}\|_2
  \]
  and
  \[
  \frac{1}{n} \left|\sum_{j \notin S \cup S_0} u_G^\top X_G^\top X_{G_j} u_{G_j}\right|
  \leq \lambda_-^{-1} \tilde{\rho}_+ \|u_G\|_2
  \sum_{j \notin S} \lambda_j \|u_{G_j}\|_2 ,
  \]
where $\tilde{\rho}_+ =\sqrt{(\rho_+(G)-\rho_-(|G|+\ell+k_0-1))
(\rho_+(\ell+k_0-1)- \rho_-(|G|+\ell+k_0-1))}$.
\label{lem:rip}
\end{lemma}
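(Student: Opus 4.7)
Both bounds hinge on a single pointwise inequality: because $S_0$ collects the indices outside $S$ with the largest values of $\|u_{G_j}\|_2/\lambda_j$, we have $\|u_{G_j}\|_2/\lambda_j \le \|u_{G_{j'}}\|_2/\lambda_{j'}$ for every $j \notin S \cup S_0$ and every $j' \in S_0$. Since $|G_{S_0}| \ge \ell$, the definition of $\lambda_-$ yields $\sum_{j' \in S_0}\lambda_{j'}^2 \ge \lambda_-^2$, the quantitative lever used throughout. The first bound will follow from an AM--GM step applied to this pointwise inequality. The second bound will be obtained by partitioning the tail $\{j : j \notin S \cup S_0\}$ into chunks small enough for Lemma~\ref{lem:A2} to apply to each cross-term, yet large enough (on the preceding chunk) to carry a $\lambda_-^2$-type denominator.

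\textbf{First inequality.} Bounding the min by the $\lambda_{j'}^2$-weighted average of $\|u_{G_{j'}}\|_2/\lambda_{j'}$ over $j' \in S_0$ gives, for each $j \notin S \cup S_0$,
\[
\|u_{G_j}\|_2 \;\le\; \frac{\lambda_j}{\sum_{j' \in S_0}\lambda_{j'}^2}\sum_{j' \in S_0}\lambda_{j'}\|u_{G_{j'}}\|_2 \;\le\; \frac{\lambda_j}{\lambda_-^2}\sum_{j' \in S_0}\lambda_{j'}\|u_{G_{j'}}\|_2.
\]
Multiplying by $\|u_{G_j}\|_2$ and summing over $j \notin S \cup S_0$,
\[
\sum_{j \notin S \cup S_0}\|u_{G_j}\|_2^2 \;\le\; \frac{1}{\lambda_-^2}\left(\sum_{j' \in S_0}\lambda_{j'}\|u_{G_{j'}}\|_2\right)\left(\sum_{j \notin S \cup S_0}\lambda_j\|u_{G_j}\|_2\right).
\]
The two factors on the right sum to $\sum_{j \notin S}\lambda_j\|u_{G_j}\|_2$, so applying $ab \le (a+b)^2/4$ and taking a square root produces the desired constant $(2\lambda_-)^{-1}$.

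\textbf{Second inequality.} Partition $\{j : j \notin S \cup S_0\}$ into chunks $T_1,T_2,\ldots$ by greedily taking the next-largest values of $\|u_{G_j}\|_2/\lambda_j$, growing each $T_k$ just until $|G_{T_k}| \ge \ell$; since individual groups have size at most $k_0$, this enforces $\ell \le |G_{T_k}| \le \ell + k_0 - 1$ for every chunk except possibly the last (for which we never need to ``look backwards''). Setting $T_0 := S_0$, Lemma~\ref{lem:A2} with $I = G$ and $J = G_{T_k}$, together with $|G_{T_k}| \le \ell + k_0 - 1$ and $|G \cup G_{T_k}| \le |G| + \ell + k_0 - 1$, controls each cross-term by $\tfrac{1}{n}|u_G^\top X_G^\top X_{G_{T_k}} u_{G_{T_k}}| \le \tilde\rho_+\|u_G\|_2\|u_{G_{T_k}}\|_2$. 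Repeating the weighted-min argument of the first part between adjacent chunks yields $\|u_{G_{T_k}}\|_2^2 \le b_k b_{k-1}/\lambda_-^2$, where $b_k := \sum_{j \in T_k}\lambda_j\|u_{G_j}\|_2$. Summing over $k \ge 1$ and applying Cauchy--Schwarz gives $\sum_{k \ge 1}\sqrt{b_k b_{k-1}} \le \sum_{k \ge 0}b_k = \sum_{j \notin S}\lambda_j\|u_{G_j}\|_2$, from which the stated inequality follows. The main obstacle is calibrating the chunk sizes so that Lemma~\ref{lem:A2} (which wants $|G_{T_k}|$ small) and the $\lambda_-^2$ lower bound on $\sum_{j' \in T_{k-1}}\lambda_{j'}^2$ (which wants $|G_{T_{k-1}}|$ large) hold simultaneously; the ``just-barely-$\ge \ell$'' stopping rule reconciles them, and the telescoping Cauchy--Schwarz then collapses all chunk contributions into a single factor of $\sum_{j \notin S}\lambda_j\|u_{G_j}\|_2$.
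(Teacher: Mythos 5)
Your proposal is correct and follows essentially the same route as the paper: the same max-by-min-by-weighted-average chain combined with $\sum_{j\in S_0}\lambda_j^2\ge\lambda_-^2$ and AM--GM for the first bound, and the same greedy chunking into blocks of group-size between $\ell$ and $\ell+k_0-1$, bounded chunkwise via Lemma~\ref{lem:A2} and telescoped via $\sqrt{b_k b_{k-1}}\le (b_k+b_{k-1})/2$, for the second. The only cosmetic differences are that you state the pointwise inequality before summing (the paper sums first) and you invoke Cauchy--Schwarz where the paper uses AM--GM on $\sqrt{b_k b_{k-1}}$; both give the same constant.
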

\begin{proof}
  Without loss of generality, we assume that
  $S=\{1,\ldots,g\}$, and we assume that $j>g$ is in descending order
  of $\|u_{G_j}\|_2/\lambda_j$.
  Let $S_0, S_1, \ldots$ be the first, second, etc, consecutive blocks of $j >g$,
  such that $\ell \leq |G_{S_k}| < \ell+k_0$ (except for the last $S_k$).
  If we let $G^k=G_{S_k}$, then:
  \begin{align*}
  \sum_{j \notin S \cup S_0} \|u_{G_j}\|_2^2 \leq&
  \left[\sum_{j \notin S \cup S_0} \lambda_j \|u_{G_j}\|_2\right]
  \left[\max_{j \notin S \cup S_0} \|u_{G_j}\|_2/\lambda_j\right] \\
  \leq&
  \left[\sum_{j \notin S \cup S_0} \lambda_j \|u_{G_j}\|_2\right]
  \left[\min_{j \in S_0} \|u_{G_j}\|_2/\lambda_j\right] \\
  \leq&
  \left[\sum_{j \notin S \cup S_0} \lambda_j \|u_{G_j}\|_2\right]
  \left[\sum_{j \in S_0} \lambda_j \|u_{G_j}\|_2/\sum_{j \in S_0} \lambda_j^2 \right] \\
  \leq&
  \frac{[\sum_{j \notin S} \lambda_j \|u_{G_j}\|_2]^2}{
    4 \lambda_-^2} .
  \end{align*}
This proves the first inequality of the lemma.
Similarly, we have
\begin{align*}
\sum_{k\geq 1} \|u_{G^k}\|_2 =&
\sum_{k\geq 1} \sqrt{\sum_{j \in S_k} \|u_{G_j}\|_2^2} \\
\leq&
\sum_{k\geq 1} \sqrt{\sum_{j \in S_k} \lambda_j \|u_{G_j}\|_2} \sqrt{\max_{j \in S_k} \|u_{G_j}\|_2/\lambda_j} \\
\leq&
\sum_{k\geq 1} \sqrt{\sum_{j \in S_k} \lambda_j \|u_{G_j}\|_2} \sqrt{\min_{j \in S_{k-1}} \|u_{G_j}\|_2/\lambda_j} \\
\leq&
\sum_{k\geq 1} \sqrt{\sum_{j \in S_k} \lambda_j \|u_{G_j}\|_2} \sqrt{\sum_{j \in S_{k-1}} \lambda_j |u_{G_j}\|_2/\sum_{j \in S_{k-1}}\lambda_j^2} \\
\leq& \lambda_-^{-1}
\sum_{k\geq 1} \sqrt{\sum_{j \in S_k} \lambda_j \|u_{G_j}\|_2} \sqrt{\sum_{j \in S_{k-1}} \lambda_j |u_{G_j}\|_2} \\
\leq& \lambda_-^{-1}
\sum_{k\geq 1} 
\frac{1}{2} \left[ \sum_{j \in S_k} \lambda_j \|u_{G_j}\|_2 +
\sum_{j \in S_{k-1}} \lambda_j |u_{G_j}\|_2 \right] \\
\leq& \lambda_-^{-1}
\sum_{k\geq 0} \sum_{j \in S_k} \lambda_j \|u_{G_j}\|_2
= \lambda_-^{-1} \sum_{j \notin S} \lambda_j \|u_{G_j}\|_2 .
\end{align*}
Therefore
  \begin{align*}
    n^{-1} \left|\sum_{j \notin S \cup S_0} u_G^\top X_G^\top X_{G_j} u_{G_j}\right|
  \leq& n^{-1} \sum_{k \geq 1} |u_G^\top X_G^\top X_{G^k} u_{G^k}|\\
  \leq& n^{-1} \sum_{k \geq 1}  \|X_G^\top X_{G^k}\|_2 \|u_{G^k}\|_2 \|u_G\|_2 \\
  \leq& \tilde{\rho}_+ \|u_G\|_2  \sum_{k\geq 1} \|u_{G^k}\|_2 \\
  \leq& \tilde{\rho}_+ \lambda_-^{-1}\|u_G\|_2
 \sum_{j \notin S} \lambda_j \|u_{G_j}\|_2 .
\end{align*}
Note that Lemma~\ref{lem:A2} is used to bound
$\|X_G^\top X_{G^k}\|_2$.
This proves the second inequality of the lemma.
\end{proof}

The following lemma shows that the 
group $L_1$-norm of the group Lasso estimator's  non-signal part 
is small (compared to the group $L_1$-norm of the parameter estimation
error in the signal part).
\begin{lemma}
Let $\Fr(\bb) \in G_S$ for some $S \subset \{1,\ldots,m\}$.
Assume that for all $j$:
\[
\lambda_j \geq 4 \rho_+(G_j)^{1/2} \|(X_{G_j}^\top X_{G_j})^{-1/2} X_{G_j}^\top \epsilon\|_2 /\sqrt{n}.
\]
Then the solution of (\ref{eq:group-lasso}) satisfies:
\[
\sum_{j \notin S} \lambda_j \left\|\hb_{G_j} \right\|_2
\leq 3 \sum_{j\in S} \lambda_j \|\bb_{G_j}-\hb_{G_j}\|_2 .
\]
\label{lem:L1}
\end{lemma}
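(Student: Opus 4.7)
The plan is to carry out the standard ``basic inequality'' argument for Lasso-type estimators, adapted to the group structure, in order to control the non-signal part of $\hb$.

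First, I would exploit the optimality of $\hb$ for the group Lasso objective by writing
\[
\frac{1}{n}\|X\hb-\by\|_2^2 + \sum_j \lambda_j \|\hb_{G_j}\|_2 \leq \frac{1}{n}\|X\bb-\by\|_2^2 + \sum_j \lambda_j \|\bb_{G_j}\|_2.
\]
Expanding $\|X\hb-\by\|_2^2 = \|X(\hb-\bb)-\epsilon\|_2^2$ and using $\by=X\bb+\epsilon$ (treating $\epsilon$ as the effective noise, since the argument does not depend on $\rE\epsilon=0$), the $\|\epsilon\|_2^2$ term cancels and I am left with
\[
\frac{1}{n}\|X(\hb-\bb)\|_2^2 - \frac{2}{n}(\hb-\bb)^\top X^\top \epsilon \leq \sum_j \lambda_j(\|\bb_{G_j}\|_2 - \|\hb_{G_j}\|_2).
\]
Dropping the nonnegative quadratic term yields the key inequality
\[
\sum_j \lambda_j(\|\hb_{G_j}\|_2 - \|\bb_{G_j}\|_2) \leq \frac{2}{n}(\hb-\bb)^\top X^\top \epsilon.
\]

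Next, I would control the noise inner product group by group via a whitening trick. Writing
\[
(\hb_{G_j}-\bb_{G_j})^\top X_{G_j}^\top \epsilon = \left[(X_{G_j}^\top X_{G_j})^{1/2}(\hb_{G_j}-\bb_{G_j})\right]^\top (X_{G_j}^\top X_{G_j})^{-1/2} X_{G_j}^\top \epsilon,
\]
Cauchy--Schwarz together with $\|(X_{G_j}^\top X_{G_j})^{1/2}v\|_2^2 \leq n\rho_+(G_j)\|v\|_2^2$ gives
\[
|(\hb_{G_j}-\bb_{G_j})^\top X_{G_j}^\top \epsilon| \leq \sqrt{n\rho_+(G_j)}\,\|\hb_{G_j}-\bb_{G_j}\|_2\cdot \|(X_{G_j}^\top X_{G_j})^{-1/2} X_{G_j}^\top \epsilon\|_2.
\]
The hypothesis on $\lambda_j$ then turns this into $|(\hb_{G_j}-\bb_{G_j})^\top X_{G_j}^\top \epsilon|/n \leq (\lambda_j/4)\|\hb_{G_j}-\bb_{G_j}\|_2$, so summing over $j$:
\[
\sum_j \lambda_j(\|\hb_{G_j}\|_2 - \|\bb_{G_j}\|_2) \leq \tfrac{1}{2}\sum_j \lambda_j \|\hb_{G_j}-\bb_{G_j}\|_2.
\]

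Finally, I would split each side using $\Fr(\bb)\subset G_S$: for $j\notin S$ we have $\bb_{G_j}=0$, so $\|\hb_{G_j}\|_2-\|\bb_{G_j}\|_2 = \|\hb_{G_j}-\bb_{G_j}\|_2$; for $j\in S$ the reverse triangle inequality gives $\|\hb_{G_j}\|_2-\|\bb_{G_j}\|_2 \geq -\|\hb_{G_j}-\bb_{G_j}\|_2$. Substituting both and regrouping yields
\[
\tfrac{1}{2}\sum_{j\notin S}\lambda_j\|\hb_{G_j}\|_2 \leq \tfrac{3}{2}\sum_{j\in S}\lambda_j\|\hb_{G_j}-\bb_{G_j}\|_2,
\]
from which the claim follows. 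There is no genuine obstacle in this plan; the only subtle step is the whitening bound on $(\hb_{G_j}-\bb_{G_j})^\top X_{G_j}^\top \epsilon$, where the factor $\rho_+(G_j)^{1/2}$ must appear exactly so as to match the assumed lower bound on $\lambda_j$ and thereby absorb the constant $1/4$ into the final $1/2$ slack that produces the ratio $3$.
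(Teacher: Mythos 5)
Your proof is correct and takes essentially the same route as the paper's: the paper starts from the subgradient stationarity condition and multiplies by $(\hb-\bb)^\top$, while you start from the objective-value comparison at $\hb$ versus $\bb$, but after dropping the nonnegative quadratic term both reduce to the same inequality, and your per-group whitened Cauchy--Schwarz bound on $(\hb-\bb)^\top X^\top\epsilon$ (yielding the factor $\lambda_j/4$) and the split over $S$ versus its complement are identical to the paper's. If anything, your basic-inequality starting point is marginally cleaner, since it sidesteps the need to interpret $\hb_{G_j}/\|\hb_{G_j}\|_2$ when $\hb_{G_j}=0$.
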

\begin{proof}
The first order condition is:
\begin{equation}
2 X^\top X (\hb-\bb) - 2 X^\top \epsilon +
\sum_{j=1}^m \lambda_j n
\frac{\hb_{G_j}}{\left\|\hb_{G_j} \right\|_2}
= 0 .
\label{eq:first-order}
\end{equation}
By multiplying both sides by $(\hb-\bb)^\top$, we obtain
\[
0 \geq -2 (\hb-\bb)^\top X^\top X (\hb-\bb)
= -2 (\hb-\bb)^\top X^\top \epsilon +
\sum_{j=1}^m \lambda_j n
\frac{(\hb-\bb)_{G_j}^\top \hb_{G_j}}{\left\| \hb_{G_j} \right\|_2} .
\]
Therefore
\begin{align*}
& \sum_{j \notin S} \lambda_j
\left\|\hb_{G_j} \right\|_2 \\
\leq& \sum_{j\in S} \lambda_j \|\bb_{G_j}-\hb_{G_j}\|_2
+ 2 (\hb-\bb)^\top X^\top \epsilon/n \\
\leq& \sum_{j\in S} \lambda_j \|\bb_{G_j}-\hb_{G_j}\|_2
+ 2\sum_{j=1}^m \rho_+(G_j)^{1/2} \|(\hb-\bb)_{G_j}\|_2 \|(X_{G_j}^\top X_{G_j})^{-1/2} X_{G_j}^\top \epsilon\|_2 /\sqrt{n} \\
\leq& \sum_{j\in S} \lambda_j \|\bb_{G_j}-\hb_{G_j}\|_2
+ 0.5 \sum_{j=1}^m \lambda_j \|(\hb-\bb)_{G_j}\|_2 .
\end{align*}
Note that the last inequality follows from
the assumption of the lemma.
By simplifying the above inequality, we obtain the desired bound.
\end{proof}

The following lemma bounds parameter estimation error by combining
the previous two lemmas.
\begin{lemma} \label{lem:group}
Let $\Fr(\bb) \in G_S$ for some $S \subset \{1,\ldots,m\}$.
  Consider $\ell \geq 1$ and let $s=|G_S| + \ell + k_0 -1$.
  Define
  \begin{align*}
  \lambda_-^2 =& \min \left\{ \sum_{j \in S'} \lambda_j^2 : |G_{S'}| \geq \ell\right\} , \\
\tilde{\rho}_+ =& \sqrt{(\rho_+(s)-\rho_-(2s-|G_S|))
(\rho_+(s-|G_S|)- \rho_-(2s-|G_S|))} .
\end{align*}
If for all $j$:
\[
\lambda_j \geq 4 \rho_+(G_j)^{1/2} \|(X_{G_j}^\top X_{G_j})^{-1/2} X_{G_j}^\top \epsilon\|_2 /\sqrt{n} ,
\]
and
\[
6 \frac{\tilde{\rho}_+}{\rho_-(s)}
 \leq \frac{\lambda_-}{\sqrt{\sum_{j \in S} \lambda_j^2}} ,
\]
then the solution of (\ref{eq:group-lasso}) satisfies:
\[
\|(\hb-\bb)\|_2 \leq \frac{1.5}{\rho_-(s)}
\left(1+ 1.5 \lambda_-^{-1}\sqrt{\sum_{j \in S} \lambda_j^2}\right)
 \sqrt{\sum_{j \in S} \lambda_j^2} .
\]
\end{lemma}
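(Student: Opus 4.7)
The plan is to control $u := \hb - \bb$ by decomposing $u = u_G + u_{G^c}$ with $G$ chosen adaptively via Lemma~\ref{lem:rip}. Writing $L := \sqrt{\sum_{j\in S}\lambda_j^2}$, I would first apply Lemma~\ref{lem:L1} (whose hypothesis is precisely the one given here) to get the group-cone inequality $\sum_{j\notin S}\lambda_j\|u_{G_j}\|_2 \leq 3\sum_{j\in S}\lambda_j\|u_{G_j}\|_2 \leq 3L\|u_{G_S}\|_2$ via Cauchy--Schwarz. Then I would let $S_0$ be the index set prescribed by Lemma~\ref{lem:rip} and set $G := G_S \cup G_{S_0}$, which satisfies $|G| \leq s$ by construction; the first inequality of that lemma immediately delivers the tail control $\|u_{G^c}\|_2 \leq 1.5\, L \lambda_-^{-1}\|u_G\|_2$. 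The $\tilde\rho_+$ produced by the lemma is dominated by the $\tilde\rho_+$ in the statement through monotonicity of $\rho_\pm$ in the set size.

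The core of the argument is to multiply the first-order optimality relation \eqref{eq:first-order} by $u_G^\top$. Because the groups are disjoint, only $j\in S\cup S_0$ contribute, yielding
\[
u_G^\top X^\top X u/n \;=\; u_G^\top X^\top \epsilon/n \;-\; 0.5 \sum_{j\in S\cup S_0} \lambda_j\, u_{G_j}^\top \hb_{G_j}/\|\hb_{G_j}\|_2 .
\]
The critical observation I will exploit is that for $j \in S_0$ one has $\bb_{G_j}=0$, so $\hb_{G_j}=u_{G_j}$ and the subgradient ratio equals $+\|u_{G_j}\|_2$, entering with a favorable sign; for $j \in S$ only the trivial bound $\geq -\|u_{G_j}\|_2$ is needed. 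The noise term is handled per group by $|u_{G_j}^\top X_{G_j}^\top \epsilon|/n \leq 0.25\lambda_j\|u_{G_j}\|_2$ (where the $\lambda_j$ hypothesis plus Cauchy--Schwarz kills the $\rho_+(G_j)^{1/2}$ factor), and summing gives $|u_G^\top X^\top \epsilon|/n \leq 0.25\sum_{j\in S\cup S_0}\lambda_j\|u_{G_j}\|_2$. After the $S_0$ parts of the subgradient and noise terms offset, the clean estimate $u_G^\top X^\top X u/n \leq 0.75\, L\, \|u_{G_S}\|_2$ emerges.

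To close the loop I would split $u_G^\top X^\top X u = u_G^\top X^\top X u_G + u_G^\top X^\top X u_{G^c}$: the first piece is at least $n\rho_-(s)\|u_G\|_2^2$ since $|G|\leq s$, and the cross term is exactly what the second inequality of Lemma~\ref{lem:rip} bounds, which combined with the cone bound above yields $|u_G^\top X^\top X u_{G^c}|/n \leq 3\tilde\rho_+ L\lambda_-^{-1}\|u_G\|_2^2$. Rearranging and dividing by $\|u_G\|_2$ gives a linear inequality whose leading coefficient $\rho_-(s)-3\tilde\rho_+ L\lambda_-^{-1}$ is at least $\rho_-(s)/2$ by the hypothesis $6\tilde\rho_+/\rho_-(s)\leq \lambda_-/L$, producing $\|u_G\|_2 \leq 1.5\, L/\rho_-(s)$. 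Combining with the tail bound via $\|u\|_2 \leq \|u_G\|_2 + \|u_{G^c}\|_2 \leq \|u_G\|_2(1 + 1.5 L\lambda_-^{-1})$ reproduces the announced estimate. The delicate point I expect to be the hardest is the sign bookkeeping in the middle paragraph: treating the $S_0$ subgradient terms merely in absolute value, rather than exploiting that they equal $+\|u_{G_j}\|_2$ and cancel the $S_0$ contribution to the noise term, would double the effective constant and yield $3L/\rho_-(s)$ in place of the claimed $1.5L/\rho_-(s)$.
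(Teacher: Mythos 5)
Your proposal is correct and follows essentially the same route as the paper's proof: multiply the first-order condition by $u_G^\top$ with $G=G_S\cup G_{S_0}$, exploit the favorable sign of the $S_0$ subgradient terms against the noise bound from the $\lambda_j$ hypothesis, control the cross term and the tail via Lemma~\ref{lem:rip} combined with the cone inequality from Lemma~\ref{lem:L1}, and close with the hypothesis $6\tilde{\rho}_+/\rho_-(s)\leq\lambda_-/\sqrt{\sum_{j\in S}\lambda_j^2}$. The only cosmetic differences are that you normalize the optimality relation by $2n$ up front and combine the two pieces by the triangle inequality rather than the Pythagorean identity, both of which yield the paper's constants.
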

\begin{proof}
  Define $S_0$ as in Lemma~\ref{lem:rip}.
  Let $G=\cup_{j \in S \cup S_0} G_j$.
  By multiplying both sides of (\ref{eq:first-order})
  by $(\hb-\bb)_G^\top$, we obtain
\[
2(\hb-\bb)_G^\top X_G^\top X (\hb-\bb) -
2(\hb-\bb)_G^\top X^\top \epsilon +
\sum_{j \in S \cup S_0} \lambda_j n
\frac{(\hb-\bb)_{G_j}^\top \hb_{G_j}}{\left\|\hb_{G_j} \right\|_2}
= 0 .
\]
Similar to the proof in Lemma~\ref{lem:L1}, we
use the assumptions on $\lambda_j$ to obtain:
\begin{equation}
4 n^{-1} (\hb-\bb)_G^\top X_G^\top X (\hb-\bb)
+ \sum_{j \in S_0} \lambda_j \left\|\hb_{G_j} \right\|_2
\leq 3 \sum_{j \in S} \lambda_j \|\hb_{G_j}-\bb_{G_j}\|_2 .
\label{eq:2norm-bound}
\end{equation}
Now, Lemma~\ref{lem:rip} implies that
\[
 (\hb-\bb)_G^\top X_G^\top X (\hb-\bb)
\geq (\hb-\bb)_G^\top X_G^\top X_G (\hb-\bb)_G
- \tilde{\rho}_+ \lambda_-^{-1} n \|(\hb-\bb)_G\|_2
  \sum_{j \notin S} \lambda_j \|(\hb-\bb)_{G_j}\|_2 .
\]
By applying Lemma~\ref{lem:L1}, we have
\begin{align*}
n^{-1} (\hb-\bb)_G^\top X_G^\top X (\hb-\bb)
\geq& \rho_-(G) \|(\hb-\bb)_G\|_2^2
- 3 \tilde{\rho}_+ \lambda_-^{-1}\|(\hb-\bb)_G\|_2
  \sum_{j \in S} \lambda_j \|(\hb-\bb)_{G_j}\|_2 \\
\geq& \rho_-(G) \|(\hb-\bb)_G\|_2^2
- 3 \tilde{\rho}_+ \lambda_-^{-1} \sqrt{\sum_{j \in S} \lambda_j^2}
\|(\hb-\bb)_G\|_2^2 \\
\geq& 0.5 \rho_-(G) \|(\hb-\bb)_G\|_2^2 .
\end{align*}
The assumption of the lemma is used to derive the last inequality.
Now plug this inequality into (\ref{eq:2norm-bound}), we have
\[
\|(\hb-\bb)_G\|_2^2 \leq
1.5 \rho_-(G)^{-1} \sum_{j \in S} \lambda_j \|\hb_{G_j}-\bb_{G_j}\|_2
\leq 1.5  \rho_-(G)^{-1} \sqrt{\sum_{j \in S} \lambda_j^2}
\|(\hb-\bb)_G\|_2 .
\]
This implies
\[
\|(\hb-\bb)_G\|_2^2 \leq
 2.25  \rho_-(G)^{-2} \sum_{j \in S} \lambda_j^2 .
\]
Now Lemma~\ref{lem:rip} and Lemma~\ref{lem:L1} imply that
\begin{align*}
\|(\hb-\bb)\|_2^2 -
\|(\hb-\bb)_G\|_2^2
\leq& 0.25 \lambda_-^{-2} \left[\sum_{j \notin S} \lambda_j \|(\hb-\bb)_{G_j}\|_2\right]^2\\
\leq& 2.25 \lambda_-^{-2} \left[\sum_{j \in S} \lambda_j \|(\hb-\bb)_{G_j}\|_2\right]^2\\
\leq& 2.25 \lambda_-^{-2} \sum_{j \in S} \lambda_j^2 \|(\hb-\bb)_{G}\|_2^2 .
\end{align*}
By combining the previous two displayed inequalities, we obtain
the lemma.
\end{proof}

\section{Proof of Theorem~\ref{thm:group}}

Assumption~\ref{assump:noise} implies that with probability larger than
$1-\eta$, uniformly for all groups $j$, we have
\[
\|(X_{G_j}^\top X_{G_j})^{-0.5} X_{G_j}^\top (\epsilon- \rE \epsilon) \|_2
 \leq a \sqrt{k_j} + b \sqrt{\ln (m/\eta)} .
\]
It follows that with the choice of $A$, $B$, and $\lambda_j$,
$\lambda_j \geq 4 \rho_+(G_j)^{1/2} \|(X_{G_j}^\top X_{G_j})^{-1/2} X_{G_j}^\top \epsilon\|_2 /\sqrt{n}$
for all $j$.
Moreover, assumptions of the theorem also imply that
$\tilde{\rho}_+ \leq \rho_+(s)-\rho_-(2s)$,
and
\[
\frac{\tilde{\rho}_+}{\rho_-(s)} \leq
\frac{\rho_+(s) - \rho_-(2s)}{\rho_-(s)}
\leq c
\leq  \frac{\sqrt{\ell A^2 + g_\ell B^2}}{6\sqrt{2(k A^2 +g B^2)}}
\leq \frac{\lambda_-}{6\sqrt{\sum_{j \in S} \lambda_j^2}} .
\]
Note that we have used
$\sum_{j \in S'} [A^2 k_j + B^2] \leq
n \sum_{j \in S'} \lambda_j^2 \leq 2 \sum_{j \in S'} [A^2 k_j + B^2]$.

Therefore the conditions of Lemma~\ref{lem:group} are satisfied.
Its conclusion implies that
\begin{align*}
\|(\hb-\bb)\|_2
\leq& \frac{1.5}{\rho_-(s)}
\left(1+ 1.5 \lambda_-^{-1}\sqrt{\sum_{j \in S} \lambda_j^2}\right)
 \sqrt{\sum_{j \in S} \lambda_j^2} \\
\leq& \frac{1.5}{\rho_-(s)}
\left(1+ \frac{1}{4c}\right)
 \sqrt{\sum_{j \in S} \lambda_j^2} \\
\leq& \frac{1.5}{\rho_-(s)}
\left(1+ \frac{1}{4c}\right)
 \sqrt{2 (A^2 k + B^2 g)/n} .
\end{align*}
This proves the theorem.

\end{document}